%
%
%

\documentclass{llncs}
\pagestyle{headings}
%
%
\usepackage{pgfplots}
\pgfplotsset{plot coordinates/math parser=false}
\pgfplotsset{soldot/.style={color=blue,only marks,mark=*}} 
\pgfplotsset{holdot/.style={color=blue,fill=white,only marks,mark=*}}

\usepackage{graphicx} 
\usepackage{color}
\usepackage{thmtools,thm-restate}
\usepackage{url}

\usepackage{amsmath}
  \usepackage{algorithm}
\usepackage{algorithmic}
\usepackage{caption}
\usepackage{xspace}
\usepackage{multirow}
\newcommand{\prob}[1]{\ensuremath{\text{Pr}\left\{#1 \right\} }}
\newcommand{\fhm}{P-hype}
\newcommand{\fastia}{(1+1)~Fast-IA\xspace}
\newcommand{\oneonefastiaE}{(1+1)~Fast-IA$_{\geq}$}
\newcommand{\oneonefastiaG}{(1+1)~Fast-IA$_{>}$}
\newcommand{\fastoptia}{Fast Opt-IA}

\newcommand{\oneonerlsk}{(1+1)~RLS$_k$}

\newcommand{\oneoneiahype}{(1+1)~IA$^\text{hyp}$\xspace}
\newcommand{\hypfcm}{\fhm$_{FCM}$\xspace}
\newcommand{\hypnofcm}{\fhm$_{BM}$\xspace}
\newcommand{\onemax}{\textsc{OneMax}\xspace}
\newcommand{\leadingones}{\textsc{LeadingOnes}\xspace}

\newcommand\T{\rule{0pt}{2.6ex}}      
\newcommand\B{\rule[-1.2ex]{0pt}{0pt}}

\setlength{\tabcolsep}{4pt}
\begin{document}
\mainmatter              
\title{Fast Artificial Immune Systems}
\titlerunning{Fast AIS}  
%
\author{Dogan Corus \and Pietro S. Oliveto \and
Donya Yazdani}
%
%
%
\institute{University of Sheffield, Rigorous Research, UK.\\
\email{d.corus@sheffield.ac.uk},
\email{p.oliveto@sheffield.ac.uk},
\email{dyazdani1@sheffield.ac.uk}
}

\maketitle              

\begin{abstract}
%
Various studies have shown that characteristic Artificial Immune System (AIS) operators such as
hypermutations and ageing can be very efficient at escaping local optima of multimodal optimisation problems.
However, this efficiency comes at the expense of considerably slower runtimes during the exploitation phase compared to standard evolutionary algorithms. We propose modifications to the traditional `hypermutations with mutation potential' (HMP) that allow them to be efficient at exploitation as well 
as maintaining their effective explorative characteristics.  Rather than deterministically evaluating fitness after each bitflip of a hypermutation,  we sample the fitness function stochastically with a `parabolic' distribution which allows  the `stop at first constructive mutation' (FCM) variant of HMP to reduce the linear amount of wasted function evaluations when no improvement is found to a constant.  By returning the best sampled solution during the hypermutation, rather than the first constructive mutation, we then turn the extremely inefficient HMP operator without FCM, into a very effective operator for the standard Opt-IA AIS using hypermutation, cloning and ageing.
We rigorously prove the effectiveness of the two proposed operators by analysing them on all problems where the performance of HPM is rigorously understood in the literature.
\keywords{Artificial immune systems, Runtime analysis}
\end{abstract}
%


\section{Introduction}

Several Artificial Immune Systems (AIS) inspired by Burnet's clonal selection principle~\cite{Burnet1959} have been developed to solve optimisation problems.
Amongst these, Clonalg \cite{DecastroVonzuben2002}, B-Cell \cite{KelseyTimmis2003} and Opt-IA \cite{CutelloTEVC,CutelloGecco03} are the most popular.
A common feature of these algorithms is their particularly high mutation rates compared to more traditional evolutionary algorithms (EAs).
For instance, the {\it contiguous somatic hypermutations} (CHM) used by the B-Cell algorithm, choose two random positions in the genotype of a candidate solution and 
flip all the bits in between\footnote{A parameter may be used to define the probability that each bit in the region actually flips. 
However, advantages of CHM over EAs have only been shown when all bits in the region flip.}. This operation results in a linear number of bits being flipped in an average mutation. 
The {\it hypermutations with mutation potential} (HMP) used by Opt-IA tend to flip a linear number of bits unless an improving solution is found first (i.e., if no {\it stop at first constructive mutation} mechanism (FCM) is used, then the operator fails to optimise efficiently any function with a polynomial number of optima~\cite{CorusOlivetoYazdani2017}).

Various studies have shown how these high mutation rates allow AIS to escape from local optima for which more traditional randomised search heuristics struggle.
Jansen and Zarges proved for a benchmark function called  Concatenated Leading Ones Blocks (CLOB) an expected runtime of $O(n^2 \log n)$ using contiguous hypermutations versus the exponential time required by EAs relying on standard bit mutations (SBM) since many bits need to be flipped simultaneously to make progress~\cite{JansenZargesTCS2011}. 
Similar effects have also been shown on the NP-Hard longest common subsequence \cite{JansenZarges2012} and vertex cover \cite{JansenOlivetoZarges2011} standard combinatorial optimisation problems with practical applications where CHM efficiently escapes local optima where EAs (with and without crossover) are trapped for exponential time. 

This efficiency on multimodal problems comes at the expense of being 
considerably slower in the final exploitation phase of the optimisation process 
when few bits have to be flipped. For instance CHM requires  $\Theta(n^2 \log 
n)$ expected function evaluations to optimise the easy \textsc{OneMax} and 
\textsc{LeadingOnes}  benchmark functions. Indeed it has recently  been shown to 
require at least $\Omega(n^2)$ function evaluations to optimise any function 
since its expected runtime for its easiest function is 
$\Theta(n^2)$~\cite{EasiestFunctions}.
A disadvantage of CHM is that it is {\it biased}, in the sense that it behaves differently according to the order in which the information is encoded in the bitstring.
In this sense the unbiased HMP used by Opt-IA are easier to apply. 
Also these hypermutations have been proven to be considerably efficient at escaping local optima such as those of the multimodal \textsc{Jump}, \textsc{Cliff}, and \textsc{Trap} benchmark functions that 
standard EAs find very difficult \cite{CorusOlivetoYazdani2017}. This performance also comes at the expense of being slower in the exploitation phase requiring, for instance,
 $\Theta(n^2 \log n)$ expected fitness evaluations for \textsc{OneMax} and $\Theta(n^3)$ for \textsc{LeadingOnes}.

In this paper we propose a modification to the HMP operator to allow it to be very efficient in the exploitation phases while  
maintaining its essential characteristics for escaping from local optima.
Rather than evaluating the fitness after each bit flip of a hypermutation as the traditional FCM requires, we propose to evaluate it based on the probability that the mutation will be successful.
The probability of hitting a specific point at Hamming distance $i$ from the current point, ${n \choose i}^{-1}$, decreases exponentially with the Hamming distance for $i < n/2$ and then it increases again in the same fashion. Based on this observation we evaluate each bit following a `parabolic' distribution such that the probability of evaluating the $i_{th}$ bit flip decreases as $i$ approaches $n/2$ and then increases again. We rigorously prove that the resulting hypermutation operator, which we call \hypfcm{}, locates local optima asymptotically as fast as Random Local Search (RLS) for any function where the expected runtime of RLS can be proven with the standard artificial fitness levels method. At the same time the operator is still exponentially faster than EAs for the standard multimodal 
\textsc{Jump}, \textsc{Cliff}, and \textsc{Trap} benchmark functions.

Hypermutations with mutation potential are usually applied in conjunction with Ageing operators in the standard Opt-IA AIS.
The power of ageing at escaping local optima has recently been enhanced by showing how it makes the difference between polynomial and exponential runtimes for the \textsc{Balance} function 
from dynamic optimisation~\cite{OlivetoSudholt2014}. For very difficult instances of \textsc{Cliff}, ageing even makes RLS asymptotically as fast as any unbiased mutation based algorithm can be on any function~\cite{Lehre2012} by running in $O(n \ln n)$ expected time~\cite{CorusOlivetoYazdani2017}.
However, the power of ageing at escaping local optima is lost when it is used in combination with hypermutations with mutation potential.
In particular, the  FCM mechanism does not allow the operator to accept solutions of lower quality, thus cancelling the advantages of ageing. Furthermore, the
high mutation rates combined with FCM make the algorithm return to the previous local optimum with very high probability. 
While the latter problem is naturally solved by our newly proposed \hypfcm{} that does not evaluate all bit flips in a hypermutation, the former problem requires a further modification to the HMP.
The simple modification that we propose is for the operator, which we call \hypnofcm, to return the best solution it has found if no constructive mutation is encountered.
We rigorously prove that Opt-IA then benefits from both operators  for all problems where it was previously analysed in the literature, as desired.

\section{Preliminaries}

Static hypermutations with mutation potential using FCM (i.e., stop at the first constructive mutation) mutate $M=cn$ distinct bits for constant $0<c \leq 1$ 
and evaluate the fitness after each bitflip \cite{CorusOlivetoYazdani2017}. If an improvement over the original solution is found before the $M_{th}$ bitflip, then the operator stops and returns the improved solution. 
This behaviour prevents the hypermutation operator to waste further fitness function 
evaluations if an improvement has already been found. However, for any realistic objective function the number of 
iterations where there is an improvement constitutes an asymptotically small 
fraction of the total runtime. Hence, the fitness function evaluations saved due to 
the FCM stopping the hypermutation have a very small impact on the global performance 
of the algorithm. 
Our proposed modified hypermutation operator,  called \fhm, 
instead only evaluates the fitness after each bitflip with a probability that depends on how many bits have already been flipped in the current hypermutation operation. 
Since previous theoretical analyses have considered $c=1$ (i.e., $M=n$)~\cite{CorusOlivetoYazdani2017}, we also use this value throughout this paper.
Let $p_i$ be the probability that the solution is evaluated after the $i_{th}$ bit has been flipped. 
The `parabolic' probability distribution is defined as follows, where the parameter $\gamma$ should be between~$0< \gamma \leq 2$: 
\begin{align}
\label{prob}
p_i= 
\begin{cases}
		1/e & \text{for}\; i=1 \;\text{and}\; i=n\\   
        \gamma/i & \text{for}\; 1<i\leq n/2\\
        \gamma/(n-i) & \text{for}\; n/2<i<n\\
\end{cases}
\end{align}

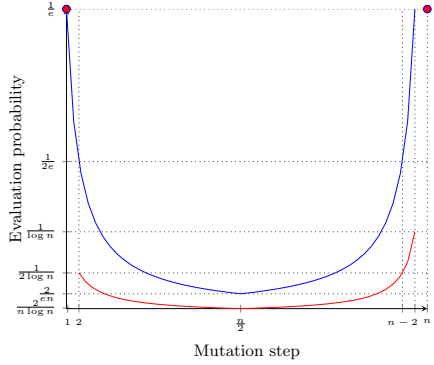
\begin{figure}[t]

\centering
\begin{tikzpicture}[ scale=0.7]
 \pgfmathsetmacro\ytkn{30} 
 \pgfmathsetmacro\ytknhalf{\ytkn/2} 
 \pgfmathsetmacro\ytkni{.22/\ytkn}
 \pgfmathsetmacro\ytknii{2/((2.7)*\ytkn)}
 \pgfmathsetmacro\ytkna{\ytkn-1} 
\pgfmathsetmacro\ytknb{\ytkn-2} 
 \pgfmathsetmacro\ytka{1/((2.7))}
 \pgfmathsetmacro\ytkb{1/(2*(2.7))}
 \pgfmathsetmacro\ytkc{1/(20)}
 \pgfmathsetmacro\ytkd{1/(10)}

\begin{axis}[
    axis x line=center, 
    axis y line=middle, 
      ytick={\ytkni, \ytknii, \ytka,\ytkb,  \ytkc, \ytkd},
yticklabels={$\frac{2}{n\log{n}}$, 
$\frac{2}{en}$,$\frac{1}{e}$,$\frac{1}{2e}$,$\frac{1}{2\log{ n } } $ , 
$\frac{1}{\log{n}}$, $$ ,$$ },
      xtick={1.1, 2,   \ytknb,\ytkna, \ytkn, \ytknhalf},
    xticklabels={1,2,$n-2$,$ $,$n$,$\frac{n}{2}$ },
    domain=1:\ytkn,
    ticklabel style = {font=\tiny},
      x label style={at={(axis description cs:0.5,-0.1)},anchor=north},
    y label style={at={(axis description cs:-0.1,.5)},rotate=90,anchor=south},
     xlabel={Mutation step},
    ylabel={Evaluation probability}]
]

\addplot[domain=1:\ytkn/2, blue] {1/((2.7)*x)};
\addplot[domain=\ytkn/2:\ytkn-1,blue] {1/(2.7*(\ytkn-x))};
\addplot[holdot, color=red] 
coordinates{(1,\ytka)(\ytkn,\ytka)};
\draw[dotted] (axis cs:1,\ytka) -- (axis cs:\ytkn,\ytka);
\draw[dotted] (axis cs:1,\ytkb) -- (axis cs:\ytkn,\ytkb);
\draw[dotted] (axis cs:1,\ytkc) -- (axis cs:\ytkn,\ytkc);
\draw[dotted] (axis cs:1,\ytknii) -- (axis cs:\ytkn,\ytknii);
\draw[dotted] (axis cs:1,\ytkd) -- (axis cs:\ytkn,\ytkd);
\draw[dotted] (axis cs:2,0) -- (axis cs:2,\ytka);
\draw[dotted] (axis cs:\ytknb,0) -- (axis cs:\ytknb,\ytka);
\draw[dotted] (axis cs:\ytkna,0) -- (axis cs:\ytkna,\ytka);
\draw[dotted] (axis cs:\ytkn,0) -- (axis cs:\ytkn,\ytka);
\addplot[holdot, color=blue,fill=red, ] 
coordinates{(1,\ytka  )(\ytkn,\ytka )};
\addplot[domain=2:\ytkn/2, red] {1/(10*x)};
\addplot[domain=\ytkn/2:\ytkn-1,red] {1/(10*(\ytkn-x))};

\end{axis}
\end{tikzpicture}
 \caption{The parabolic evaluation probabilities (\ref{prob}) for 
{\color{red}$\gamma=1/\log{n}$} and  {\color{blue}$\gamma=1/e$}.}
\end{figure}
%

The lower the value of $\gamma$, the fewer the expected fitness function evaluations that occur in each hypermutation. 
For $\gamma=i$ we get the original static hypermutation. 
On the other hand, with a small 
enough parameter $\gamma$ value, the number of wasted evaluations can be dropped to the 
order of $O(1)$ per iteration instead of the linear amount wasted by the traditional operator when improvements are not found. 
The resulting hypermutation operator is formally defined as follows.

\begin{definition}[\hypfcm{}]\label{def:hyp-fcm}
 \hypfcm{} flips at most $n$ distinct bits selected uniformly at random. It evaluates the fitness after the $i_{th}$ bitflip with 
probability $p_i$ (as defined in (\ref{prob})) and remembers the last evaluation. \hypfcm{}  stops flipping bits when it finds an improvement; if no improvement is found, it will return the last evaluated solution. If no evaluations are made, the parent will be returned.
\end{definition}

In the next section we will prove its benefits over the standard static HMP with FCM, when incorporated into a (1+1) framework (Algorithm \ref{alg:fastia}). 
However, in order for the operator to work effectively in conjunction with ageing, a further modification is required.
Instead of stopping the hypermutation at the first constructive mutation, we 
will execute all $n$ mutation steps, evaluate each bitstring with 
the probabilities in (\ref{prob}) and as the offspring, return the 
best solution evaluated during the hypermutation or the parent itself if 
no other bitstrings are evaluated. We will prove that such a modification, which 
we call \hypnofcm{}, may allow 
the complete Opt-IA to escape local optima more efficiently by \hypnofcm{} producing solutions of lower quality than the local optimum on which the algorithm was stuck
while individuals on the local optimum die due to ageing. \hypnofcm{} is formally defined as follows. 

\begin{definition}[\hypnofcm{}]\label{def:hyp-Nofcm}
\hypnofcm{} flips $n$ distinct bits selected uniformly at random. It evaluates 
the fitness after the $i_{th}$ bitflip with 
probability $p_i$ (as defined in (\ref{prob})) and remembers the best evaluation 
found so far. \hypnofcm{} returns the mutated solution with the best evaluation 
found. If no evaluations are made, the parent will be returned.
\end{definition}

For sufficiently small values of the parameter $\gamma$ only one function evaluation per hypermutation is performed in expectation (although all bits will be flipped).
Since it returns the best found one, this solution will be returned by \hypnofcm{} as it is the only one it has encountered.
Interestingly, this behaviour is similar to that of the HMP without FCM that also evaluates one point per hypermutation and returns it.
However, while HMP without FCM has exponential expected runtime for any function with a polynomial number of optima~\cite{CorusOlivetoYazdani2017}, we will show in the following sections that \hypnofcm{} can be very efficient. From this point of view, \hypnofcm{} is as a very effective way to perform hypermutations with mutation potential without FCM.

In Section \ref{sec:fastoptia}, we consider \hypnofcm{} in the complete 
Opt-IA framework \cite{CutelloTEVC,CutelloGecco03,CorusOlivetoYazdani2017} hence analyse its performance 
combined with cloning and ageing.  
The algorithm which we call \fastoptia, is depicted in Algorithm 
\ref{alg:fastoptia}.
We will use the  \textit{hybrid ageing} operator as in \cite{CorusOlivetoYazdani2017,OlivetoSudholt2014}, which allows us to 
escape local optima. Hybrid ageing removes candidate solutions (i.e. b-cells) 
with probability $p_{die}=1-(1/(\mu+1))$ once they have passed an age threshold 
$\tau$. 
After initialising a population of $\mu$ b-cells with $age=0$, at each iteration the algorithm creates $dup$ copies of each b-cell. These copies are mutated by the \fhm{} operator, creating a population of mutants called $P^{hyp}$ which inherit the age of their parents if they do not improve the fitness; otherwise their age will be set to zero. At the next step, all b-cells with $age \geq \tau$ will be removed from both populations with probability $p_{die}$. 
If less than $\mu$ individuals have survived ageing, then the population is filled up with new randomly generated individuals.
At the selection phase, the best $\mu$ b-cells are chosen to form the population for the next generation.

\begin{algorithm}[t]
\caption{\fastia}
\begin{algorithmic}[1]
\STATE{Initialise $x$ uniformly at random.}
\WHILE{a global optimum is not found}
\STATE{Create $y=x$, then $y=\text{\fhm{}}(y)$;}
\STATE{If $f(y) \geq f(x)$, then $x=y$.}
\ENDWHILE
\end{algorithmic}
\label{alg:fastia}
\end{algorithm}

\begin{algorithm}[t]
\caption{\fastoptia}
\begin{algorithmic}[1]
\STATE{Initialise a population of $\mu$ b-cells, $P$, created uniformly 
at random;} 
\STATE{\textbf{for} each $x \in$ $P$ set $x^{age}=0$.}
\WHILE{a global optimum is not found}
\STATE{\textbf{for} each $x \in$ $P$ set $x^{age}=x^{age}+1$;}
\FOR{ $dup$ times for each $x \in P$ }
\STATE{$y=\text{\fhm{}}(x)$; }
\STATE{\textbf{if} $f(y)>f(x)$ \textbf{then } $y^{age}= 0$ \textbf{else} $y^{age}= x^{age}$;}
\STATE{Add $y$ to $P^{hyp}$.}
\ENDFOR
\STATE{Add $P^{hyp}$ to $P$, set $P^{hyp}=\emptyset$;}
\STATE{\textbf{for} each $x \in$ $P$ \textbf{if} $x^{age} \geq \tau$ \textbf{then} remove $x$ with probability $p_{die}$; }
\STATE{\textbf{if} $|P| < \mu $ \textbf{then } 
add $\mu-|P|$ solutions to $P$ with age zero generated uniformly at random;}
\STATE{\textbf{if} $|P| > \mu $ \textbf{then } remove $|P|-\mu$ solutions with the lowest fitness from $P$ breaking ties uniformly at random.}
\ENDWHILE
\end{algorithmic}
\label{alg:fastoptia}
\end{algorithm}
%
%
%

\section{Fast Hypermutations}

We start our analysis by relating the expected number of fitness function 
evaluations to the expected number of \fhm{} operations until the optimum 
is found. The following result holds for both \fhm{} operators.
The lemma quantifies the number of expected fitness function evaluations which are wasted by a hypermutation operation.
\begin{restatable}{lemma}{wald}\label{lem:wald}
 Let $T$ be the random variable denoting the number of \fhm{}
operations applied until the optimum is found. Then, the  expected number of 
total function evaluations is at most: $E[T]\cdot O(1+\gamma\log{n})$.
\end{restatable}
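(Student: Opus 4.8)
The plan is to bound the expected number of fitness evaluations made inside a single \fhm{} call and then lift this bound to the whole run by a Wald-type identity. First I would observe that, by Definition~\ref{def:hyp-fcm} and Definition~\ref{def:hyp-Nofcm}, the number of evaluations performed in one hypermutation is stochastically dominated by $Z:=\sum_{i=1}^{n}B_i$, where the $B_i$ are independent Bernoulli variables with $\Pr\{B_i=1\}=\min(p_i,1)$ and the $p_i$ are the parabolic probabilities~(\ref{prob}); for \hypnofcm{} this holds with equality, whereas for \hypfcm{} stopping at the first constructive mutation can only decrease the count. By linearity of expectation,
\[
E[Z]\le \sum_{i=1}^{n}p_i \;=\; \frac{2}{e}\;+\;\gamma\!\!\sum_{1<i\le n/2}\!\frac{1}{i}\;+\;\gamma\!\!\sum_{n/2<i<n}\!\frac{1}{n-i}\;\le\;\frac{2}{e}+2\gamma H_{\lceil n/2\rceil}\;=\;O(1+\gamma\log n),
\]
where $H_m$ denotes the $m$-th harmonic number; the substitution $j=n-i$ in the last sum and the bound $H_m\le 1+\ln m$ make this routine, and the $i=n/2$ boundary term and the parity of $n$ contribute only constants.

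For the global accounting, let $X_k$ be the number of fitness evaluations performed by the $k$-th \fhm{} operation, and let $\mathcal F_{k-1}$ be the $\sigma$-algebra generated by the first $k-1$ hypermutations together with all of their internal randomness. The event $\{T\ge k\}$ is $\mathcal F_{k-1}$-measurable, since after $k-1$ operations it is already decided whether the optimum has been reached. I would couple $X_k$ with a variable $Z_k$ built from fresh Bernoulli coins drawn during operation $k$, so that $X_k\le Z_k$, $Z_k\stackrel{d}{=}Z$, and $Z_k$ is independent of $\mathcal F_{k-1}$. Writing $\sum_{k=1}^{T}X_k=\sum_{k\ge1}X_k\mathbf 1_{\{T\ge k\}}$ and using $\sum_{k\ge1}\Pr\{T\ge k\}=E[T]$,
\[
E\!\left[\sum_{k=1}^{T}X_k\right]=\sum_{k\ge1}E\!\left[X_k\mathbf 1_{\{T\ge k\}}\right]\le\sum_{k\ge1}E\!\left[Z_k\mathbf 1_{\{T\ge k\}}\right]=\sum_{k\ge1}E[Z_k]\,\Pr\{T\ge k\}=E[Z]\cdot E[T],
\]
and combining with the estimate $E[Z]=O(1+\gamma\log n)$ yields the claim (if $E[T]=\infty$ the inequality is trivial).

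The one point requiring care, and the reason for passing to the dominating sequence $Z_k$, is that $X_k$ itself is \emph{not} independent of $\{T\ge k\}$: a run still in progress may be localised in a region where constructive mutations, and hence early stops of \hypfcm{}, are more or less likely, so $E[X_k\mathbf 1_{\{T\ge k\}}]\ne E[X_k]\Pr\{T\ge k\}$ in general. Coupling each operation's evaluation count to fresh coins that are independent of the history restores the factorisation $E[Z_k\mathbf 1_{\{T\ge k\}}]=E[Z_k]\Pr\{T\ge k\}$, which is the crux of the argument; everything else is the harmonic-sum bound above.
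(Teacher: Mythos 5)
Your proposal is correct and follows essentially the same route as the paper: dominate the per-operation evaluation count by the count of the run that never stops early (a sum of independent Bernoullis with the parabolic probabilities $p_i$), bound its expectation by the harmonic sum $\frac{2}{e}+2\gamma\sum_{i=2}^{n/2}1/i=O(1+\gamma\log n)$, and multiply by $E[T]$ via Wald's equation. The only difference is that you spell out the measurability and independence conditions behind Wald's identity explicitly, where the paper simply cites it after noting that the dominating variables are i.i.d.\ and independent of $T$.
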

\begin{proof}
Let the random variable $X_i$  for $i \in [T]$ denote the number of fitness 
function evaluations during the $i$th execution of \fhm. Additionally, let 
the random variable $X_{i}'$ denote the number of fitness function evaluations 
at the $i_{th}$ operation assuming that no improvements are found.  For all $i$ it holds 
that  $X_i \preceq X_{i}'$ since finding an improvement can only decrease the 
number of evaluations. 
Thus, the total number of function evaluations $E[\sum_{i=1}^{T}X_i]$
can be bounded above by $E[\sum_{i=1}^{T}X_{i}']$ which is equal to $E[T]\cdot 
E[X']$ due to Wald's equation \cite{MitzenmacheUpfal} 
 since $X_{i}'$ are identically distributed and 
independent from $T$.

We now write the expected number of fitness function evaluations in each operation as the 
sum of $n$ indicator variables $Y_i$ for $i\in[n]$ denoting whether an 
evaluation occurs after the $i$th bit mutation. Referring to the probabilities 
in (\ref{prob}), we get,
$
 E[X]=E\left[\sum\limits_{i=1}^{n}Y_i\right]=\sum\limits_{i=1}^{n} Pr\{Y_i=1\}
 = \frac{1}{e} + \frac{1}{e}+2\sum\limits_{i=2}^{n/2} \gamma \frac{1}{i} \leq 
\frac{2}{e} + 2\gamma\left(\ln{n/2}-1\right) $.
\qed 
\end{proof}

In Lemma~\ref{lem:wald}, $\gamma$ appears as a multiplicative factor 
in the expected runtime measured in fitness function evaluations. 
An intuitive lower bound of $\Omega(1/\log{n})$ for $\gamma$ can be inferred 
since smaller mutation rates will not decrease the runtime. While a smaller 
$\gamma$ does not decrease the asymptotic order of expected evaluations per 
operation, in Section~\ref{sec:fastoptia} we will provide an example where 
a smaller choice of $\gamma$ reduces $E[T]$ directly. For the rest of our 
results though, we will rely on $E[T]$ being the same as for the traditional static hypermutations with FCM
while the number of wasted fitness function evaluations decreases from $n$ to 
$O(1+\gamma\log{n})$.

We will now  analyse the simplest setting where we can implement \fhm{}. The  \fastia{} 
keeps 
a single individual in the population and uses \fhm{} to perturb it at every 
iteration. The performance of the \oneoneiahype{}, a similar barebones 
algorithm using the classical static hypermutation operator has recently been 
related to the performance of the well-studied Randomised Local Search 
algorithm (RLS) \cite{CorusOlivetoYazdani2017}. $\textsc{RLS}_k$ flips exactly $k$ bits of the current solution 
to sample a new search point, compares it with the current solution 
and continues with the new one unless it is worse. According to Theorem~3.3 and  Theorem~3.4 of 
\cite{CorusOlivetoYazdani2017}, any runtime upper bound for RLS obtained via 
Artificial Fitness Levels (AFL) method also holds for the \oneoneiahype{} with 
an additional factor of $n$ (e.g., an upper bound of $O(n)$ for RLS derived via AFL translates 
into an upper bound of $O(n^2)$ for the \oneoneiahype). The following theorem 
establishes a similar relationship between RLS and the \fastia{} with a factor of 
$O(1+\gamma\log{n})$ instead of $n$. In the context of the following theorem, 
\oneonefastiaE{} denotes the variant of \fastia{} which considers an equally good 
solution as constructive while \oneonefastiaG{} stops the hypermutation only if a 
solution strictly better than the parent is sampled.

\begin{restatable}{theorem}{aflk} \label{th:aflk}
Let $E\left(T^{AFL}_{A}\right)$ be any upper bound on the expected 
runtime of  algorithm A established by the artificial fitness levels method. 
Then\\ $E\left(T^{AFL}_{\text{\fastia}_>}\right) \leq 
E\left(T^{AFL}_{(1+1)~RLS_k}\right) \cdot k/\gamma \cdot O(1+\gamma\log{n})$. 
Moreover, for the special case of $k=1$, $E\left(T^{AFL}_{\text{\fastia}_{\geq}}\right) 
\leq E\left(T^{AFL}_{(1+1)~RLS_1}\right) \cdot O(1+\gamma\log{n})$ also holds. 
\end{restatable}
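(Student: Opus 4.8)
\noindent\textit{Proof plan.} The idea is to rerun the artificial fitness levels (AFL) argument directly for \fastia{}, re-using the very partition $A_1,\dots,A_m$ (the levels ordered by increasing fitness, with $A_m$ the optimum) and the transition probabilities $s_1,\dots,s_{m-1}$ behind the given AFL bound $\sum_{i=1}^{m-1}1/s_i$ for \oneonerlsk{}, and then to turn the resulting bound on the number of \fhm{} \emph{operations} into a bound on fitness \emph{evaluations} through Lemma~\ref{lem:wald}.

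First I would isolate the structural fact that drives everything: because \fhm{} flips the bits in a uniformly random order, the string $x^{(k)}$ obtained after the first $k$ bitflips of a hypermutation started at $x$ has exactly the distribution of the offspring produced by \oneonerlsk{} on $x$, and the step-$k$ evaluation coin (of bias $p_k$) is independent of the bits flipped. Hence, for every $x\in A_i$, $\Pr\{x^{(k)}\in A_{>i}\}\ge s_i$, where $A_{>i}:=\bigcup_{j>i}A_j$, since $s_i$ is by definition a uniform lower bound over $A_i$ on exactly this \oneonerlsk{} transition probability.

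Next, taking the partition to be the fitness-value partition so that ``constructive'' and ``of higher level'' coincide, I would note that \oneonefastiaG{} stops and returns the string of the first step $j$ whose evaluation coin is heads and at which $x^{(j)}$ is strictly fitter; therefore one \fhm{} operation started in $A_i$ already leaves $A_i$ whenever this happens for $j=k$ (that step $k$ may not actually be reached is irrelevant: the stopping step is then $\le k$ and its string lies in $A_{>i}$), an event of probability $p_k\cdot\Pr\{x^{(k)}\in A_{>i}\}\ge p_k s_i$, uniformly over $x\in A_i$; and if no such step occurs the returned string has fitness $\le f(x)$ and the current point stays in $A_i$. As \fastia{} is elitist in the level ordering, the AFL method gives at most $\sum_{i=1}^{m-1}1/(p_k s_i)=(1/p_k)\sum_{i=1}^{m-1}1/s_i$ expected \fhm{} operations to find the optimum; reading $p_k$ off~(\ref{prob}) gives $1/p_k=k/\gamma$ for $2\le k\le n/2$ and $1/p_1=e=O(1)$, and Lemma~\ref{lem:wald} multiplies by $O(1+\gamma\log n)$ to pass to evaluations, which is exactly the two stated bounds (for $k=1$ the constant $e$ is absorbed). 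The same reasoning covers \oneonefastiaE{} when $k=1$, since then no earlier bitflip exists at which its ``$\ge$'' rule could halt on an equal-fitness, hence same-level, string; for $k\ge 2$ that very possibility is what keeps \oneonefastiaE{} out of the general bound.

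The step I expect to be the main obstacle is securing the ``leaves $A_i$'' probability \emph{uniformly} over all $x\in A_i$ --- i.e.\ ruling out that a hypermutation gets trapped making moves within a single level --- which is precisely what forces the fitness-value partition (where a constructive step is a level-increasing step) and the phrasing of the argument through membership in $A_{>i}$ instead of through ``finding an improvement''.
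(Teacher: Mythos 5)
Your proposal is correct and follows essentially the same route as the paper's proof: couple the first $k$ bitflips of \fhm{} with the \oneonerlsk{} offspring distribution, lower-bound the per-iteration success probability by $p_k s_i$, apply AFL, and convert operations to evaluations via Lemma~\ref{lem:wald}. The only difference is that you treat more carefully the point the paper's terse proof glosses over (an early stop on a strictly fitter but same-level string), which you resolve by fixing the fitness-value partition; this is a welcome clarification but not a different argument.
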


\begin{table}[t]
\caption{Expected runtimes of the standard (1+1)~EA and \oneoneiahype versus the expected runtime of the \fastia.
For $\gamma=O(1/\log{n})$, the \fastia is asymptotically at least as fast as the (1+1)~EA and faster by a linear factor compared to the \oneoneiahype  for the unimodal and trap functions.
For not too large jump and cliff sizes (i.e. $o(n/\log n)$), the \fastia has an asymptotic speed up compared to the \oneoneiahype for the same parameter setting.
For not too small jump/cliff sizes both AIS are much faster than the (1+1)~EA.
} 
 \label{table:afl}
\begin{center}
\begin{tabular}{ |l l l l| }
 \hline
 Function&(1+1)~ EA&\oneoneiahype & \fastia
\T\B \\ \hline
 \textsc{OneMax}& $\Theta(n \log{n})$ \cite{DrosteJansenWegener2002} & $\Theta(n^2 \log{n})$ \cite{CorusOlivetoYazdani2017}& $\Theta\left(n 
\log{n}\left(1+\gamma \log{n}\right)\right)$
\T\B \\ 
 \textsc{LeadingOnes}& $\Theta(n^{2})$ \cite{DrosteJansenWegener2002} & $\Theta(n^3)$ \cite{CorusOlivetoYazdani2017}& $\Theta\left(n^2 
\left(1+\gamma \log{n}\right)\right)$ \T\B \\
 \textsc{Trap}& ${\Theta(n^n)}$ \cite{DrosteJansenWegener2002}& $\Theta(n^2 \log{n})$ \cite{CorusOlivetoYazdani2017} & $\Theta\left(n 
\log{n}\left(1+\gamma \log{n}\right)\right)$ \T\B \\
 $\textsc{Jump}_{d>1}$ & ${\Theta(n^d)}$ \cite{DrosteJansenWegener2002}& $O(n\binom{n}{d})$ \cite{CorusOlivetoYazdani2017} & 
$O\left(\left(d/\gamma\right) \cdot \left(1+\gamma 
\log{n}\right) \cdot \binom{n}{d}\right)$\T\B \\
 $\textsc{Cliff}_{d>1}$ & ${\Theta(n^d)}$ \cite{Jorge2015}& $O(n\binom{n}{d})$ \cite{CorusOlivetoYazdani2017} & 
$O\left(\left(d/\gamma\right) \cdot \left(1+\gamma 
\log{n}\right) \cdot \binom{n}{d}\right)$\T\B \\
\hline
 \end{tabular}
\end{center}
\end{table}

Apart from showing the efficiency of the \fastia, the theorem also allows easy achievements of upper bounds on the runtime of the algorithm, by just analysing the simple RLS. 
For $\gamma=O(1/\log{n})$, Theorem~\ref{th:aflk} implies the upper bounds of 
$O(n\log{n})$ and  $O(n^2)$ for classical benchmark functions \textsc{OneMax}  
and \textsc{LeadingOnes} respectively (see Table~\ref{table:afl}). Both of these 
bounds are asymptotically tight since each function's unary unbiased black-box 
complexity is in the same order as the presented upper bound \cite{Lehre2012}. 


\begin{corollary}\label{cor:onemax}
The expected runtimes of the \fastia{} to optimise 
\\$\textsc{OneMax}(x):=\sum_{i=1}^{n}x_i$ and 
$\textsc{LeadingOnes}:=\sum_{i=1}^{n}\prod_{j=1}^{i}x_j$ are 
respectively $O\left(n \log{n}\left(1+\gamma \log{n}\right)\right)$ and 
$O(n^2 \left(1+\gamma \log{n}\right))$. For $\gamma=O(1/\log{n})$ these 
bounds reduce to $\Theta(n \log{n})$ and $\Theta(n^2)$.
\end{corollary}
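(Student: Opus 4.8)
The plan is to read the upper bounds off Theorem~\ref{th:aflk} after supplying the textbook artificial fitness levels (AFL) analyses of \oneonerlsone{} on the two functions, and then to obtain the matching $\Theta$-bounds for $\gamma=O(1/\log n)$ from the unary unbiased black-box complexities of \onemax{} and \leadingones{}.

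First I would recall the AFL upper bounds on the expected runtime of \oneonerlsone{}. For \onemax{} I would use the partition in which level $i$ ($0\le i\le n$) consists of all bitstrings with exactly $i$ ones; from such a bitstring with $i<n$, a one-bit flip is improving precisely when it flips one of the $n-i$ zero-bits, so the probability of leaving level $i$ is $(n-i)/n$ and the AFL bound is $\sum_{i=0}^{n-1} n/(n-i)=n\sum_{k=1}^{n}1/k=O(n\log n)$. For \leadingones{} I would let level $i$ be the set of bitstrings with exactly $i$ leading ones; from level $i<n$ the unique improving one-bit flip is the one flipping the leftmost zero-bit, which has probability exactly $1/n$, so the AFL bound is $\sum_{i=0}^{n-1} n=n^2=O(n^2)$. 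In both cases the AFL argument uses only strict improvements and single-bit flips, so it is precisely of the form to which the $k=1$ case of Theorem~\ref{th:aflk} applies.

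Next I would invoke that $k=1$ case for the \oneonefastiaE{} variant, which multiplies the \oneonerlsone{} AFL bound by a factor $O(1+\gamma\log n)$; this gives $O(n\log n\,(1+\gamma\log n))$ for \onemax{} and $O(n^2\,(1+\gamma\log n))$ for \leadingones{}, and substituting $\gamma=O(1/\log n)$ makes the factor $O(1)$, collapsing the bounds to $O(n\log n)$ and $O(n^2)$. For the lower bounds I would note that the \fastia{} produces every new search point by flipping a uniformly random set of still-unflipped bits of a previously queried point and selects among queried points using only fitness comparisons, so it is a unary unbiased black-box algorithm; its expected number of function evaluations is therefore at least the unary unbiased black-box complexity of the function, i.e.\ $\Omega(n\log n)$ for \onemax{} and $\Omega(n^2)$ for \leadingones{} \cite{Lehre2012} (evaluations skipped or repeated inside a hypermutation only increase the count, hence do not affect this bound). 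Combined with the upper bounds for $\gamma=O(1/\log n)$ this yields $\Theta(n\log n)$ and $\Theta(n^2)$.

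The AFL calculations and the substitution into Theorem~\ref{th:aflk} are routine, so I expect the only delicate point to be the lower-bound step: one must verify that the intermediate points generated within a single \fhm{} operation --- each obtained from the previous one by flipping one further uniformly random unflipped bit --- are themselves legitimate moves in the unary unbiased black-box model, so that the lower bounds of \cite{Lehre2012} genuinely transfer to the full algorithm rather than to an idealised simplification of it.
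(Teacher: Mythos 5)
Your proposal is correct and follows essentially the same route as the paper: the upper bounds come from the $k=1$ special case of Theorem~\ref{th:aflk} applied to the standard AFL bounds of $O(n\log n)$ and $O(n^2)$ for RLS$_1$ on \onemax and \leadingones, and the matching lower bounds are obtained exactly as the paper does, by observing that the algorithm is unary unbiased so the black-box complexity results of \cite{Lehre2012} apply. Your extra care in checking that the intermediate hypermutation samples are themselves unary unbiased variations of the parent is a worthwhile detail that the paper leaves implicit.
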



\fhm{} samples the complementary bit-string with 
probability one if it cannot find any improvements. This behaviour allows an 
efficient optimisation of the deceptive \textsc{Trap} function which is 
identical to \textsc{OneMax} except that the optimum is in $0^n$. 
Since $n$ bits have to be flipped to reach the global optimum from the local optimum,  evolutionary algorithms based on standard bit mutation
require exponential runtime with overwhelming probability \cite{OlivetoYaoBookChapter}. By  
evaluating the sampled bitstrings stochastically, the \fastia{} provides up to a linear speed-up for small enough $\gamma$ compared to the  \oneoneiahype{}
on \textsc{Trap} as well. 

\begin{restatable}{theorem}{trap}
\label{thm:trap}
The expected runtime of the \fastia{} to optimise \textsc{Trap} is 
\\$\Theta(n \log{n} \left(1+\gamma \log{n}\right))$.
\end{restatable}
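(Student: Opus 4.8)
I would split the run into a \emph{climbing} phase, in which the current point $x$ moves from the random initial string up to the local optimum $1^n$, and an \emph{escape} phase consisting of the single improving step $1^n\to 0^n$. The claim then reduces to: the climb costs $\Theta(n\log n)$ applications of \fhm{} in expectation, the escape costs $\Theta(1)$, and each of $\Omega(n\log n)$ of these operations wastes $\Theta(1+\gamma\log n)$ fitness evaluations, so that by Lemma~\ref{lem:wald} (for the upper bound) and a matching lower bound on wasted evaluations the runtime is $\Theta\!\left(n\log n(1+\gamma\log n)\right)$. Note Theorem~\ref{th:aflk} cannot be applied directly here, since $\text{RLS}_1$ is stuck forever at $1^n$ on \textsc{Trap}; the climb must be treated by hand, using that \textsc{Trap} coincides with \textsc{OneMax} everywhere except at $0^n$.

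\textbf{Upper bound.} While $x$ has $k<n$ ones, a single \fhm{} operation increases the number of ones with probability at least $\tfrac{n-k}{n}\cdot p_1=\tfrac{n-k}{en}$, because it suffices that the first flipped bit is one of the $n-k$ zero-bits and that this first flip is evaluated. Hence the climb to $1^n$ needs at most $\sum_{k=0}^{n-1}\tfrac{en}{n-k}=enH_n=O(n\log n)$ operations in expectation (and if $0^n$ is produced during the climb, that only helps). From $1^n$, every string obtained after $i<n$ bitflips has $n-i<n$ ones and hence fitness below $f(1^n)=n$, so no such string is constructive for either variant; \fhm{} therefore always performs all $n$ flips, reaches $0^n$, and evaluates it with probability exactly $p_n=1/e$. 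So the escape phase needs $e=O(1)$ operations in expectation, and since line~4 of Algorithm~\ref{alg:fastia} accepts $0^n$ (fitness $n+1$), the optimum is found. By Lemma~\ref{lem:wald} the expected number of fitness evaluations is $O(n\log n)\cdot O(1+\gamma\log n)$.

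\textbf{Lower bound.} Let the ``level'' of a non-optimal point be its number of ones; this level is non-decreasing along the run. From level $n-j$ a single operation returns a strictly better point only if at some evaluated step it has flipped more zero-bits than one-bits; a union bound over the evaluated step gives probability at most $\sum_{i=1}^{n}p_i\,(2ej/n)^{\lceil (i+1)/2\rceil}=O(j/n)$. The only way a single operation produces $0^n$ from level $n-j$ is to flip exactly the $n-j$ one-bits before any zero-bit and then evaluate, of probability at most $1/\binom{n}{j}$. Summing the latter over the $O(n/j)$ operations spent at level $n-j$ shows that, with probability $1-o(1)$, the algorithm reaches a point with $n-1$ ones before $0^n$ is ever evaluated, and that each level $n-j$ with $j\le\sqrt n$ is visited with probability $1-O(j/n)=1-o(1)$. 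Since the initial string has at most $2n/3$ ones with overwhelming probability, the algorithm spends an expected $\Omega(n/j)$ operations at each level $n-j$, $j\le\sqrt n$, totalling $\Omega(n)\sum_{j\le\sqrt n}1/j=\Omega(n\log n)$ operations. Finally, at any level $n-j$ with $j\le\sqrt n$, with constant probability the first $\lceil\sqrt n\rceil$ flipped bits are all one-bits (probability $\ge(1-2n^{-1/2})^{\sqrt n}=\Omega(1)$); along such a prefix the fitness strictly decreases, so \fhm{} evaluates each step $i\le\sqrt n$ independently with probability $p_i$ without stopping, and $\sum_{i\le\sqrt n}p_i=\Theta(1+\gamma\log n)$, so by a Chernoff bound (or, when this mean is $O(1)$, just the event $Y_1=1$) the operation uses $\Omega(1+\gamma\log n)$ evaluations with constant probability. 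Multiplying the expected number of such operations by this per-operation contribution yields $E[\text{evaluations}]=\Omega\!\left(n\log n(1+\gamma\log n)\right)$, matching the upper bound.

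\textbf{Main obstacle.} The upper bound is routine; the work is all on the lower-bound side. The delicate points are: (i) showing, with probability $1-o(1)$, that \fhm{} never evaluates the complement of an intermediate (non-$1^n$) string, which is the only mechanism that could let the algorithm reach $0^n$ without first doing the $\Omega(n\log n)$ ``coupon-collector'' bitflips; (ii) controlling level-skipping (``free riders'') so that the $\Omega(n\log n)$ bound on the number of operations survives; and (iii) establishing that a constant fraction of these operations each waste $\Omega(1+\gamma\log n)$ evaluations, which relies on the observation that on a \textsc{OneMax}-like slope \fhm{} typically begins with a long run of one-bit flips and therefore keeps evaluating non-constructive strings.
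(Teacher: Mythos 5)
Your proposal is correct, and on the upper-bound side it is exactly the paper's argument: the published proof consists of two sentences, citing Corollary~\ref{cor:onemax} for the \textsc{OneMax}-like climb to $1^n$ in $O(n\log n(1+\gamma\log n))$ evaluations and then observing that from $1^n$ the complement $0^n$ is sampled with probability $1$ and evaluated with probability $p_n=1/e$, at a cost of $O(1+\gamma\log n)$ further evaluations per attempt. Your by-hand first-bit-flip computation of the climb is the same AFL calculation that underlies that corollary, and your remark that Theorem~\ref{th:aflk} cannot be invoked verbatim (RLS$_1$ never leaves $1^n$ on \textsc{Trap}) is a fair point that the paper sidesteps by routing through the \textsc{OneMax} corollary instead. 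Where you genuinely diverge is the lower bound: the paper's proof does not argue the $\Omega$ direction at all (tightness for $\gamma=O(1/\log n)$ is inherited from the $\Omega(n\log n)$ unary unbiased black-box complexity cited after Corollary~\ref{cor:onemax}, but the extra $\gamma\log n$ factor for larger $\gamma$ is never established in the text). Your three-part argument --- $\Omega(n\log n)$ operations via a fitness-level lower bound, exclusion of a premature evaluation of $0^n$ from intermediate levels, and $\Omega(1+\gamma\log n)$ wasted evaluations per operation because a typical hypermutation from a near-optimal point begins with a long non-constructive prefix of 1-bit flips that are evaluated independently with probabilities $p_i$ --- is sound in outline and supplies exactly what the written proof omits. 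Two steps would need tightening in a full write-up: the free-rider control (the claim that level $n-j$ is visited with probability $1-O(j/n)$ is asserted rather than derived; conditioning on the gain per improvement having constant expectation, or a drift-based lower bound, would be cleaner), and the union bound $\sum_i p_i(2ej/n)^{\lceil (i+1)/2\rceil}$, whose stated form is only meaningful for $i=O(j)$ (for larger $i$ the event of having flipped more zero-bits than one-bits is vacuous since only $j$ zero-bits exist, so the $O(j/n)$ conclusion stands).
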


  The  results for the \oneoneiahype{} on \textsc{Jump} and \textsc{Cliff} 
functions \cite{CorusOlivetoYazdani2017} can also be adapted to the \fastia{}  in a straightforward manner, even 
though they fall out of the scope of Theorem~\ref{th:aflk}. Both $\textsc{Jump}_d$ 
and $\textsc{Cliff}_d$  have the same output as \textsc{OneMax} for bitstrings 
with up to $n-d$ 1-bits and the same optimum $1^n$. For solutions with the 
number of 1-bits between $n-d$ and $n$, \textsc{Jump} has a reversed 
\textsc{OneMax} slope creating a gradient towards $n-d$ while \textsc{Cliff} 
has a slope heading toward $1^n$ even though the fitness values are penalised by an additive factor $d$.
Being designed to accomplish larger mutations, the performance of hypermutations 
on \textsc{Jump} and \textsc{Cliff} functions is superior to standard bit 
mutation \cite{CorusOlivetoYazdani2017}. This advantage is preserved for the \fastia{} as seen in the following 
theorem.
%
%

\begin{restatable}{theorem}{jump}
The expected runtime of the \fastia{} to optimise $\textsc{Jump}_d$ and 
$\textsc{Cliff}_d$ is $O\left(\left(d/\gamma\right) \cdot \left(1+\gamma 
\log{n}\right) \cdot \binom{n}{d}\right)$. \end{restatable}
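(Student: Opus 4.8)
The plan is to measure progress in terms of the number of \fhm{} operations $E[T]$ and then invoke Lemma~\ref{lem:wald} to convert this into fitness evaluations, exactly as in the proof of Theorem~\ref{th:aflk}. The key observation is that on both $\textsc{Jump}_d$ and $\textsc{Cliff}_d$ the algorithm behaves like it would on \textsc{OneMax} until it reaches the boundary of the gap, i.e.\ a search point with exactly $n-d$ one-bits. Reaching this point from a random start takes, by the AFL argument behind Corollary~\ref{cor:onemax}, at most $O((n/\gamma)\log n)$ operations (the $1/\gamma$ coming from the fact that each single good bitflip is evaluated with probability only $\gamma/i \geq \gamma/n$, so an expected $O(1/\gamma)$ operations per improving step, and $O(n\log n)$ improving steps in the coupon-collector sense), which is dominated by the claimed bound. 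The heart of the argument is the final jump across the gap.

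First I would fix the state where $x$ has exactly $n-d$ ones (for \textsc{Jump}, the local optimum; for \textsc{Cliff}, the top of the cliff just below the gap, or more precisely any point from which a direct flip of $d$ specific bits reaches $1^n$). I would then lower-bound the probability that a single \fhm{} operation jumps directly to $1^n$. For this to happen two independent-ish things must occur: (i) the $d$ zero-bits must all be among the first $d$ bits selected by the hypermutation before any $0$-bit of the parent is ``undone'' — actually the cleaner route is that after exactly $d$ bitflips the operator has selected precisely the $d$ missing bits, an event of probability $\binom{n}{d}^{-1}$; and (ii) the operator must actually \emph{evaluate} the string after the $d$-th bitflip, which happens with probability $p_d = \gamma/d$ (for $d \leq n/2$), and this evaluated point $1^n$ is a strict improvement so the operator stops and returns it. Crucially, for \textsc{Jump} one must also check that no \emph{earlier} evaluated point (after $i < d$ flips) is accepted as constructive and thereby stops the hypermutation prematurely; since the \fhm{} in Algorithm~\ref{alg:fastia} uses the $>$ variant and every intermediate point on the \textsc{Jump} slope between $n-d$ and $n$ ones has fitness strictly below $f(x)$, no such premature stop occurs — this is precisely the point where flipping all $n$ bits without FCM would be wasteful but the stochastic-evaluation FCM is not. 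For \textsc{Cliff} the intermediate points have higher \textsc{OneMax} value but are penalised by $d$, so again $f < f(\text{parent})$ for the relevant comparison and the same reasoning applies; alternatively one argues that even if an intermediate point is accepted the algorithm has only moved ``up the cliff'' and can retry. Combining (i) and (ii), each operation from the critical state succeeds with probability $\Omega\!\bigl(\gamma/(d\binom{n}{d})\bigr)$, so the expected number of operations to cross the gap is $O\!\bigl((d/\gamma)\binom{n}{d}\bigr)$.

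Adding the $O((n/\gamma)\log n)$ operations to first reach the critical state (subsumed, since $(d/\gamma)\binom{n}{d} = \Omega(n^2/\gamma)$ for $d\geq 2$) gives $E[T] = O\!\bigl((d/\gamma)\binom{n}{d}\bigr)$, and multiplying by the per-operation evaluation cost $O(1+\gamma\log n)$ from Lemma~\ref{lem:wald} yields the claimed bound $O\!\bigl((d/\gamma)(1+\gamma\log n)\binom{n}{d}\bigr)$. The main obstacle I anticipate is the bookkeeping for \textsc{Cliff}: one must argue carefully that the algorithm does not get ``stuck'' oscillating on the cliff slope in a way that inflates $E[T]$ beyond the jump cost — the clean fix is to note that from \emph{any} point with between $n-d$ and $n-1$ ones there is still a direct hypermutation to $1^n$ of probability $\Omega(\gamma/(d'\binom{n}{d'}))$ with $d' \le d$, and $\binom{n}{d'} \le \binom{n}{d}$ for $d' \le d \le n/2$, so the bound holds uniformly over the slope and a restart-style argument (or a simple drift/expectation argument over the at most $d$ possible positions) closes it. A secondary subtlety is the case $d > n/2$, where $p_d = \gamma/(n-d)$ rather than $\gamma/d$; here $n-d < n/2 < d$ so $p_d$ is in fact \emph{larger}, only improving the bound, so no separate treatment is needed.
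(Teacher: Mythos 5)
Your proposal is correct and follows essentially the same route as the paper's proof: reach a point with $n-d$ one-bits via the \textsc{OneMax}/AFL argument of Corollary~\ref{cor:onemax}, bound the per-operation probability of the direct jump to $1^n$ by $\binom{n}{d}^{-1}\cdot\gamma/d$, and convert operations to evaluations via Lemma~\ref{lem:wald}. The additional care you take (no premature stop on the \textsc{Jump} plateau, uniformity over the \textsc{Cliff} slope, the $d>n/2$ case) is detail the paper's two-line proof omits but does not change the argument.
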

For \textsc{Jump} and \textsc{Cliff}, the superiority of the \fastia{} in comparison to 
the deterministic evaluations scheme depends on the function parameter $d$. 
If $\gamma=\Omega(1/\log{n})$, the \fastia{} performs better 
for $d=o(n/\log{n})$ while the  deterministic scheme (i.e., \oneoneiahype)  is preferable for larger $d$. 
However, for small $d$ the difference between the runtimes can be as large as 
a factor of $n$ in favor of the \fastia{} while,  even for the largest $d$, the 
difference is less than a factor of $\log{n}$ in favor of the deterministic 
scheme. Here we should also note that for $d= \omega(n/\log{n})$ the expected 
time is exponentially large for both algorithms (albeit considerably smaller than that of standard EAs) and the $\log{n}$ factor has no 
realistic effect on the applicability of the algorithm.

\section{Fast Opt-IA}\label{sec:fastoptia}

In this section we will consider the effect of our proposed evaluation scheme on 
the complete Opt-IA algorithm. The distinguishing characteristic of the Opt-IA algorithm 
is its use of the ageing and hypermutation operators. In 
\cite{CorusOlivetoYazdani2017} a fitness function called \textsc{HiddenPath} (Fig. \ref{hp}) was presented where the use of 
both operators is necessary to find the optimum in polynomial time. The function 
\textsc{HiddenPath} provides a gradient to a local optimum, which allows the 
hypermutation operator to find another gradient which leads to the global 
optimum but situated on the opposite side of the search space (i.e., nearby the 
complementary bitstrings of the local optima). However, the ageing operator is 
necessary for the algorithm to accept worsening; otherwise the second 
gradient is not accessible. To prove our upper bound, we can follow the same proof strategy in 
\cite{archive}, which established an upper bound of $O(\tau \mu n+\mu n^{7/2})$ for the expected runtime of the traditional Opt-IA on \textsc{HiddenPath}.
We will see that Opt-IA benefits from an $n/\log{n}$ speedup due to \fhm. 	
\begin{figure}[t!]
 \centering
  \includegraphics[width=.6\textwidth]{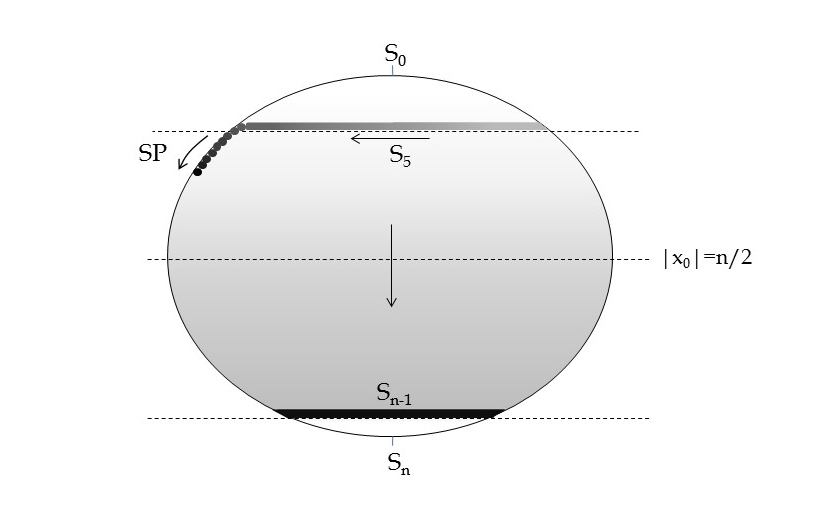}
 \caption{\textsc{HiddenPath} \cite{CorusOlivetoYazdani2017}}
 \label{hp}
 \end{figure}
 
\begin{restatable}{theorem}{hiddenpath}
The \fastoptia{} needs $O(\tau \mu + \mu n^{5/2}\log{n})$ fitness function 
evaluations  in expectation to optimise \textsc{HiddenPath} with
$\mu=O(\log n)$, $dup=1$, $1/(4\ln{n})\geq \gamma=\Omega(1/\log{n})$ and 
$\tau=\Omega(n \log^2 
n)$. 
\end{restatable}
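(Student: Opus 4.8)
The plan is to mimic the analysis of traditional Opt-IA on \textsc{HiddenPath} from~\cite{archive}, dividing the run into two phases, and to show that the only place the new \hypnofcm{} operator changes the analysis is the number of fitness evaluations wasted per hypermutation, which drops from $\Theta(n)$ to $O(1+\gamma\log n)=O(1)$ for $\gamma=\Theta(1/\log n)$. First I would recall the geometry of \textsc{HiddenPath}: there is a \textsc{OneMax}-style gradient leading the population to a local optimum region $L$ (around bitstrings with a fixed small number of 1-bits, or symmetric to it), and a second, hidden gradient of length $\Theta(\sqrt n)$ that can only be entered near the \emph{complement} of $L$, i.e.\ at a point roughly $n$ bits away. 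Phase~1 establishes that, starting from a random population of $\mu=O(\log n)$ b-cells, after $O(\mu n^{5/2}\log n)$ evaluations the algorithm has a b-cell sitting at the end of the first gradient / start of the hidden path. Phase~2 shows that from there the algorithm traverses the hidden path to the global optimum in $O(\tau\mu + \mu n^{5/2}\log n)$ evaluations.

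For Phase~1, the key step is that climbing the \textsc{OneMax}-like gradient is exactly the situation covered by Theorem~\ref{th:aflk}/Corollary~\ref{cor:onemax}-style reasoning: RLS$_1$ climbs a length-$\Theta(n)$ gradient in $O(n\log n)$ steps via artificial fitness levels, so each of the $O(\log n)$ b-cells needs $O(n\log n)$ successful-or-neutral hypermutation \emph{operations}; with $dup=1$ and $\mu=O(\log n)$ this is $O(n\log^2 n)=O(n^{5/2}\log n)$ operations, and by Lemma~\ref{lem:wald} each costs $O(1)$ evaluations in expectation, giving $O(\mu n^{5/2}\log n)$. I also need to check that ageing does not interfere during this phase: improvements reset age to $0$, and since $\tau=\Omega(n\log^2 n)$ is chosen larger than the time to climb the gradient, no b-cell reaches the age threshold before Phase~1 completes (this is exactly why $\tau$ is set this large, and it contributes the $O(\tau\mu)$ term as the cost of one "ageing cycle" when we do want a restart). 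The delicate point, as in the original proof, is the transition: once a b-cell is stuck at the local optimum $L$, \hypnofcm{} — flipping all $n$ bits but evaluating only $O(1)$ of them near the two ends of the mutation sequence — produces, with probability $\Omega(1)$ per operation, a point near the complement of $L$; crucially, because it evaluates roughly the first and last bitflips (the endpoints of the parabola, where $p_i=1/e$), it can actually \emph{sample} the near-complement point and, returning the best evaluated offspring, will accept it only once ageing has killed the incumbent at $L$. Here a smaller $\gamma$ genuinely helps: fewer intermediate evaluations means the near-complement point is more likely to be the best one returned, which is the example promised after Lemma~\ref{lem:wald} where $\gamma$ reduces $E[T]$ directly, and it is why the constraint $\gamma\le 1/(4\ln n)$ appears.

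For Phase~2, once on the hidden path I would argue the path is again an \textsc{OneMax}/\textsc{LeadingOnes}-type gradient of length $\Theta(\sqrt n)$ embedded in the space, climbable one bit at a time, so by the same AFL argument it takes $O(n^{1/2}\cdot n)=O(n^{3/2})$ hypermutation operations per b-cell in the worst case, and overall $O(\mu n^{5/2}\log n)$ evaluations once the per-operation cost $O(1)$ is folded in; the dominant contribution to the bound comes from repeatedly needing an ageing cycle of length $\tau$ whenever the algorithm must escape $L$, whence the additive $O(\tau\mu)$. I would then combine Phase~1 and Phase~2 and invoke linearity of expectation, noting $\mu=O(\log n)$ absorbs logarithmic factors consistently with the claimed $O(\tau\mu+\mu n^{5/2}\log n)$. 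The main obstacle I anticipate is not the gradient-climbing bookkeeping (that is routine given Theorem~\ref{th:aflk} and Lemma~\ref{lem:wald}) but verifying the escape step rigorously: one must show that \hypnofcm{}, despite evaluating almost nothing in the middle of the mutation, still reliably evaluates the right near-complement endpoint, that returning the \emph{best sampled} offspring (not the first constructive one) is what makes the difference versus the FCM variant, and that ageing synchronises correctly — the incumbent at $L$ must die (all $\mu$ copies, via $p_{die}=1-1/(\mu+1)$, within $O(\tau)$ generations after reaching age $\tau$) in the same window in which a near-complement sample is produced and survives selection. Quantifying this "race" between ageing and resampling, and confirming it succeeds in an expected $O(\tau)$ further generations per attempt with $\Omega(1/\mathrm{poly})$ success probability, is the crux; everything else follows the template of~\cite{archive} with the single substitution of $O(1)$ for $n$ in the evaluation count.
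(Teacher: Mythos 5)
Your high-level plan (follow the proof of Theorem~11 in \cite{archive} for the traditional Opt-IA and replace the $\Theta(n)$ wasted evaluations per hypermutation with the $O(1+\gamma\log n)=O(1)$ of Lemma~\ref{lem:wald}) is indeed the paper's strategy, but the proposal has two genuine gaps. First, the geometry of \textsc{HiddenPath} is misstated in a way that breaks the bookkeeping: the hidden path SP consists of the points $1^{n-k}0^k$ for $5\le k\le \log n+1$, so it has length $\Theta(\log n)$, not $\Theta(\sqrt n)$, and the $n^{5/2}$ in the bound does \emph{not} come from climbing any gradient of that length. It arises because the entry to SP is reached by first taking over the population with $S_5$ solutions (five 0-bits) and then climbing a $5$-level gradient that moves the 0-bits into the last five positions, which costs $O(n^2/\gamma)$ generations in expectation; one then applies Markov's inequality to say this does not exceed $O(n^{5/2}/\gamma)$ with probability $1-1/\sqrt n$, which is needed to argue that, with probability $1-o(1)$, no relapse to a local optimum (an $S_{n-1}$ solution) occurs before SP is found. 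Your derivation ``$O(n\log^2 n)=O(n^{5/2}\log n)$'' is vacuously true but does not identify why the weaker exponent is actually required, and nothing in your proposal bounds the relapse probability at all (the paper does this with a Ballot-theorem argument giving $O(n^{-4})$ per hypermutation); without that bound the whole phase decomposition is unsound, since a single relapse resets the process.

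Second, you explicitly flag the escape from the local optimum as ``the crux'' and then do not carry it out, whereas this is precisely the part that must be proved. The paper's chain is: (a) the population of $S_{n-1}$ solutions synchronises in age, which itself needs an argument that at most one new $S_{n-1}$ is created per generation with probability $1-o(1)$ so that Lemma~3 of \cite{OlivetoSudholt2014} applies, costing $O(\mu^3 n)$ generations; (b) after reaching age $\tau$, exactly $\mu-1$ b-cells die and one survives with probability $(1-1/(\mu+1))^{2\mu-1}\cdot 1/(\mu+1)$; (c) the survivor samples the complementary bitstring (an $S_1$ solution) and, for \hypnofcm{}, evaluates \emph{only} that one --- this is where $\gamma\le 1/(4\ln n)$ enters, since the expected number of evaluations in mutation steps $2,\dots,n-1$ is at most $2\sum_{i=2}^{n/2}\gamma/i\le 1/2$, so by Markov no intermediate evaluation occurs with probability at least $1/2$, giving an overall $\Omega(1)$ success probability; (d) the $S_1$ solution produces an $S_5$ solution, and $S_5$/$S_{n-5}$ solutions take over the population in $O(\log\log n)$ generations by repeatedly sampling each other's complements. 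Your sketch conflates (c) with ``accepting the complement only after ageing kills the incumbent,'' but the actual mechanism is that the survivor's \emph{offspring} (the complement) enters the population alongside the random restarts, and the incumbent has already died in step (b). Without steps (a)--(e) made quantitative, the $O(\tau\mu)$ term and the constant success probability per ageing cycle are asserted rather than proved.
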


\textsc{HiddenPath} was artificially constructed to fit the behaviour of the 
Opt-IA to illustrate its strengths. One of those strengths was the 
ageing mechanism's ability to escape local optima in two different ways. First, it allows the algorithm to restart with a new random population after it gets 
stuck at a local optimum. Second, ageing allows individuals 
with worse fitness than the current best to stay in the population when all the current best individuals 
are removed by the ageing operator in the same iteration. If an improvement is found soon 
after the worsening is accepted, then this temporary non-elitist behaviour 
allows the algorithm to follow other gradients which are accessible by variation 
from the local optima but leads away from them. On the other hand, even though 
it is coupled with ageing in the Opt-IA, the FCM mechanism does not allow 
worsenings. More precisely, for the hypermutation with FCM, the complementary 
bit-string of the local optimum is sampled with probability $1$ if no other 
improvements are found. 
Indeed, \textsc{HiddenPath} was designed to exploit 
this high probability. 
However, by only stopping on improving mutations, the traditional hypermutations with FCM do not allow, in general, to take advantage of the power of ageing at escaping local optima.
For instance, for the classical benchmark function 
$\textsc{Cliff}_d$ with parameter $d=\Theta(n)$, hypermutation with FCM turned 
out to be a worse choice of variation operator to couple with ageing than both 
local search and standard-bit-mutation \cite{archive}. Ageing coupled with RLS 
and 
SBM can reach the optimum by local moves, which respectively yields upper bounds of  
$O(n\log{n})$ and $O(n^{1+\epsilon}\log{n})$ for arbitrarily small constant 
$\epsilon$ on their runtimes. However, hypermutations with FCM require to 
increase the number of 1-bits in the current solution by $d$ at least once 
before the hypermutation stops. This requirement implies the following exponential lower bound on the runtime 
regardless of the evaluation scheme (as long as the hypermutation only stops on a constructive mutation). 

\begin{restatable}{theorem}{fcmcliff} \label{th:fcmcliff}
\fastoptia{} using \hypfcm{}  requires at least $2^{\Omega(n)}$ fitness function evaluations in expectation to find the optimum of 
$\textsc{Cliff}_d$ for $d=(1-c)n/4$, where $c$ is a constant $1>c>0$.
\end{restatable}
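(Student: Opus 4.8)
The plan is to show that \fastoptia{} using \hypfcm{} essentially cannot make progress on $\textsc{Cliff}_d$ for $d=(1-c)n/4$ without first producing, in a single hypermutation call, a solution whose number of $1$-bits exceeds that of the current local optimum by at least $d$, and that the probability of this event in any single hypermutation is $2^{-\Omega(n)}$. First I would recall the structure of $\textsc{Cliff}_d$: below $n-d$ ones the function is \textsc{OneMax}, there is a "cliff" at $n-d$ ones where the fitness drops by an additive $d$, and then a second \textsc{OneMax}-like slope rising toward $1^n$. With the stated $d=(1-c)n/4$, the fittest points on the low slope (those with exactly $n-d$ ones) have strictly higher fitness than every point on the high slope except those very close to $1^n$; a short computation pins down the exact window of Hamming weights on the high slope that beat weight $n-d$, and shows it has width $\Theta(n)$ located near weight $n$. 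Consequently, once the algorithm's best individuals sit at weight $n-d$ (which happens quickly, in polynomial time, by climbing the first slope), any constructive mutation for \hypfcm{} must land either (a) at a weight in the narrow high-slope window near $1^n$, or (b) at $1^n$ itself; and since \hypfcm{} with FCM returns the parent whenever no constructive mutation is sampled, the age of those individuals is only reset on such jumps.

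Next I would make the "escape requires a giant jump" argument precise. A hypermutation starting from a search point $x$ of weight $n-d$ flips a prefix of the random permutation of bit positions; the first evaluated search point is sampled at step $i$ with probability $p_i$, but crucially, by the FCM rule, the operator keeps flipping until it either hits a constructive point or exhausts all $n$ flips. I would argue that the first search point along this flip-sequence that has fitness strictly above $f(x)$ — equivalently, the point at which the operator stops — must, with probability $1-2^{-\Omega(n)}$, be reached only after flipping $\Omega(n)$ bits, because to improve on weight $n-d$ one needs the weight to reach the high-slope window near $n$, i.e. to go from $n-d$ ones up by roughly $d+\Theta(n)=\Theta(n)$ ones net, which requires flipping $\Omega(n)$ of the $d$ zero-bits to ones before the accumulated flips of one-bits undo it; standard Chernoff/anti-concentration bounds on the trajectory of the weight under a random flip-prefix show the probability of ever entering that window during a single hypermutation is $2^{-\Omega(n)}$. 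I also need to rule out that ageing helps: with FCM, a non-improving hypermutation returns the parent unchanged, so a local-optimum individual's age keeps increasing and it will eventually be removed by ageing with probability $p_{die}$; but then the population is refilled with uniformly random b-cells, which with probability $1-2^{-\Omega(n)}$ have weight $n/2\pm o(n) < n-d$, i.e. strictly worse, so they are discarded at selection, and the algorithm is effectively restarted on the same first slope — it never gains access to the high slope via worsening, exactly because FCM forbids returning the worse complementary-type points that \textsc{HiddenPath}-style arguments would need.

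Putting these together: in each iteration at most $\mu\cdot dup = O(\log n)$ hypermutations are performed, each of which finds the optimum (or any point on the winning high-slope window) with probability $2^{-\Omega(n)}$; so the expected number of iterations, and hence of fitness evaluations (which by Lemma~\ref{lem:wald} differ only by an $O(1+\gamma\log n)$ factor), is $2^{\Omega(n)}$. I would finish with a union bound over a $2^{cn}$-length phase to show that with overwhelming probability no lucky jump occurs, giving the claimed $2^{\Omega(n)}$ expected runtime. The main obstacle I anticipate is step two: carefully controlling the \emph{entire} flip-trajectory of a single hypermutation (not just its endpoint) to show it cannot sneak into the high-slope window, since the FCM rule means the stopping point is a first-passage event and one must argue that the random-permutation-prefix weight process stays below the threshold $n-d$ (well below the window near $n$) throughout, except with exponentially small probability — this needs a maximal-inequality / negative-association argument rather than a single Chernoff bound, and the constant $c$ enters precisely to guarantee a $\Theta(n)$ gap between $n-d$ and the bottom of the winning window.
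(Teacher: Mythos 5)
Your core argument is essentially the paper's: the paper proves exactly your ``giant jump'' step via Lemma~\ref{lem:fcmcprobound}, a Serfling-type tail bound for the without-replacement weight trajectory, applied with a union bound over the $n$ mutation steps. That union bound already resolves the ``first-passage / maximal inequality'' difficulty you flag as the main obstacle --- no negative-association machinery is needed, since $e^{-b^2/k}$ summed over $k\le n$ with $b=\Theta(n)$ is still $2^{-\Omega(n)}$. The paper also adds a preliminary step you fold into your narrative: it first shows (same lemma, with jump size $n/10$) that only parents within $n/10$ of the cliff edge can ever produce a post-cliff offspring, so that the required increase for an \emph{acceptable} offspring is at least $d$ from any relevant parent. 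One harmless slip: the set of high-slope weights beating weight $n-d$ is just $\{n\}$, not a window of width $\Theta(n)$; this only strengthens the jump requirement.

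There is, however, one genuine error in your ageing step. You assert twice that a non-improving \hypfcm{} call ``returns the parent unchanged''; by Definition~\ref{def:hyp-fcm} it returns the \emph{last evaluated solution}, which is in general a different (worse) point. Consequently your claim that the algorithm ``never gains access to the high slope via worsening'' does not follow as written: a non-acceptable post-cliff offspring (e.g.\ of weight $n-d+1$) can be returned, inherits the parent's age, and survives the ageing purge with probability $1/(\mu+1)$, so worse post-cliff points \emph{can} enter the population. To close this you would need the additional observation the paper encodes in its ``acceptable solution'' framing: such a low post-cliff individual is dominated at selection by the near-cliff pre-cliff individuals unless it climbs, and under FCM its first constructive mutation is, with constant probability per step, a return to a pre-cliff point of higher fitness, so it cannot climb the $\Theta(n)$ levels to the winning region before being displaced. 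Without some version of this argument (or the paper's reduction to ``an acceptable offspring requires a weight increase of $d$''), your proof of the ageing-does-not-help step rests on a false description of the operator.
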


The following theorem will demonstrate how \hypnofcm{}, 
that, instead of stopping the hypermutation at the first constructive mutation, 
will execute all $n$ mutation steps, evaluates each bitstring with the probabilities in (\ref{prob}) and return the best found solution,
allows ageing and hypermutation to work in harmony in Opt-IA.

\begin{restatable}{theorem}{nfcmpos}\label{thm:cliff}
\fastoptia{} using \hypnofcm{} with $\mu=1$, $dup=1$, $\gamma=1/(n \log^2 n)$ and 
$\tau= \Theta(n \log n)$  needs $O(n \log n)$ fitness function evaluations in 
expectation to optimise \textsc{Cliff} with any linear $d \leq n/4-\epsilon$ for 
an small constant $\epsilon$. \end{restatable}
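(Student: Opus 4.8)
The plan is to track a single individual (since $\mu=1$, $dup=1$) through three phases: (1) reaching the local optimum region near $1^{n-d}0^d$ by exploiting the \onemax-like slope; (2) aging out at the local optimum so that a worsening is accepted; and (3) descending the \onemax slope on the other side of the cliff to $1^n$. The key enabling fact is that with $\gamma = 1/(n\log^2 n)$, \hypnofcm{} evaluates a single bitstring per hypermutation with probability $1-o(1)$ (by the computation in Lemma~\ref{lem:wald}, the expected number of evaluations is $O(1+\gamma\log n) = 1 + o(1)$), and that single evaluated point is the one reached after flipping $i$ bits where $i$ is drawn roughly according to the parabolic law. So \hypnofcm{} behaves almost exactly like an operator that flips $i$ random bits for a random $i$ concentrated on small values: with probability $1/e$ it evaluates after the first bitflip (mimicking $\textsc{RLS}_1$), and the total probability mass on $i \geq 2$ is only $O(\gamma \log n) = o(1/\log n)$.

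\textbf{Phase 1 (climb to the local optimum).} While the current search point has $k < n-d$ one-bits, a single-bit flip of a zero-bit is improving and happens with probability $\tfrac1e\cdot\tfrac{n-k}{n}$ per hypermutation; this is the dominant event. Standard artificial-fitness-levels / coupon-collector reasoning (essentially Theorem~\ref{th:aflk} applied on the \onemax slope, or a direct calculation) gives $O(n\log n)$ hypermutations, hence $O(n\log n)$ function evaluations, to reach a point with $n-d$ one-bits. I must also check that the rare multi-bit evaluations do not push the individual \emph{across} the cliff prematurely in a way that helps — they can only help, so an upper bound is unaffected — and that once at fitness level $n-d$ the individual does not accidentally go \emph{up} the reversed slope; but going up the cliff side is penalised, so it is not accepted as an improvement, and with FCM absent the operator simply returns whatever it evaluated, which is then rejected by the selection step $f(y) > f(x)$... actually one must be careful: selection in \fastia{} uses $\geq$, but Opt-IA's replacement keeps the best $\mu$, so a strictly-worse point is rejected. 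Good.

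\textbf{Phase 2 (escape via ageing).} Once the individual sits at the top of the cliff with age growing, no improving mutation exists that \hypnofcm{} is likely to find: an improvement requires jumping to the $1^n$ side, which needs flipping $\geq d$ specific bits, probability $\binom nd^{-1}$ times the (tiny) mass $\gamma/d$ that step $d$ is even evaluated — negligible on the $O(n\log n)$ timescale. So within $O(\tau)=O(n\log n)$ iterations the individual reaches age $\tau$; with probability $p_{die}=1-\tfrac12=\tfrac12$ it is removed, the population drops below $\mu=1$, and a fresh random individual is inserted — \emph{or}, more usefully, in the iteration the old individual dies we also generated an offspring $y$ of slightly lower fitness (age inherited, not reset, since it did not improve) which survives and becomes the new current point on the cliff's far side. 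The crucial estimate is: conditioned on being at the local optimum, with probability $\Omega(1)$ a single hypermutation flips exactly one $1$-bit among the leading $n-d$ ones (probability $\tfrac1e\cdot\tfrac{n-d}{n}=\Omega(1)$), producing a point with $n-d-1$ ones on the penalised side, i.e. fitness $n-d-1-d$, which is worse than $n-d$ but, once the old elite dies, is the sole survivor. From there the \onemax gradient on that side (increasing one-bits from $n-d-1$ up towards $n$, each single-bit flip improving) is climbed in another $O(n\log n)$ function evaluations by the same Phase-1 argument, reaching $1^n$. One must verify ageing does not kill \emph{this} descending individual before it reaches the top: each improvement resets age to $0$, and improvements occur every $O(n)$ iterations in expectation while $\tau=\Theta(n\log n)$, so by a simple drift/tail argument the age stays below $\tau$ throughout the descent with high probability — choosing the constant in $\tau=\Theta(n\log n)$ large enough handles this.

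\textbf{The main obstacle} is Phase 2: making rigorous that the "old elite dies at the same iteration a worsened copy survives" event, combined with "the worsened copy then reaches $1^n$ before either aging out again or the process restarting", happens within $O(n\log n)$ \emph{expected} evaluations and not merely with constant probability per $\Theta(\tau)$-length attempt (which would give $O(\tau \cdot \text{something})$, not $O(n\log n)$). This requires showing the success probability of a single "attempt" (one aging cycle) is $\Omega(1)$ and that each attempt costs $O(n\log n)$, so the expected number of attempts is $O(1)$; and simultaneously controlling that a failed attempt (fresh random restart) does not blow up the bound — a random restart lands at $\approx n/2$ ones with high probability, climbs to the local optimum in $O(n\log n)$ evaluations, and retries, so the total is a geometric sum of $O(n\log n)$ terms with ratio bounded below $1$, giving $O(n\log n)$ overall. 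The delicate point, as in \cite{archive}, is the precise coupling of the death event with the survival of exactly the right kind of worsened offspring, and ensuring the parabolic evaluation distribution with the chosen $\gamma$ neither suppresses the needed single-bit move (it does not: mass $1/e$) nor introduces harmful large jumps (mass $o(1)$).
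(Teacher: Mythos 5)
There is a genuine gap, located exactly where you flagged ``the main obstacle'', plus a directional error earlier that breaks Phase~3 as written. Your escape move goes the wrong way across the cliff: at the local optimum ($n-d$ one-bits, fitness $n-d$) you flip a single $1$-bit to reach a point with $n-d-1$ one-bits, which you describe as lying on the penalised side with fitness $n-d-1-d$. Under the definition of $\textsc{Cliff}_d$ used here, a point with $n-d-1\leq n-d$ one-bits is \emph{not} penalised: its fitness is $n-d-1$, and the improving gradient from there leads straight back to the local optimum at $n-d$ one-bits, where the step to $n-d+1$ one-bits is again a fitness decrease. So the climb ``from $n-d-1$ up towards $n$'' does not exist. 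The correct escape is to flip a $0$-bit, producing a \emph{post-cliff} point with $n-d+1$ one-bits and penalised fitness $n-2d+3/2<n-d$; only from there does a single-bit gradient lead to $1^n$. One must then also require an immediate further improvement in the next iteration (another $0$-bit flip and evaluation, constant probability) so that the age is reset to zero, since the worsened offspring inherits age $\geq\tau$ and would otherwise keep dying.

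Second, your dismissal of the rare multi-bit evaluations (``they can only help'') is valid on the first slope but false on the second, and this is where the specific choice $\gamma=1/(n\log^2 n)$ does its work. While climbing post-cliff at $k$ one-bits (fitness $k-d+1/2$), any evaluated intermediate point that is pre-cliff with more than $k-d$ one-bits has strictly higher fitness, is returned by \hypnofcm{} as the best evaluated solution, is accepted, and undoes the escape. The needed calculation is that the probability that any mutation step in $\{2,\dots,n-1\}$ is evaluated in one hypermutation is at most $\sum_{i=2}^{n-1}\gamma/i\leq 2\gamma\log n=2/(n\log n)$, so over the $O(n\log n)$ iterations of the climb no such evaluation ever occurs with probability $\left(1-2/(n\log n)\right)^{O(n\log n)}=\Omega(1)$; the first mutation step cannot yield a pre-cliff point once $k\geq n-d+2$, and the complement has fitness at most $d-1$ and is harmless. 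Your ``mass $o(1)$ per iteration'' is not strong enough to survive multiplication by $\Theta(n\log n)$ iterations, and Theorem~\ref{th:nfcmneg} shows that a larger $\gamma$ genuinely destroys the result. With these two repairs your overall structure (constant success probability per ageing attempt, each attempt costing $O(\tau+n\log n)=O(n\log n)$ evaluations, geometric number of attempts) matches the paper's argument and yields the stated bound.
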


Note that the above result requires a $\gamma$ in the order of 
$\Theta(1/(n\log^2{n}))$, while Lemma~\ref{lem:wald} implies that 
any $\gamma=\omega(1/\log{n})$ would not decrease the expected number of 
fitness function evaluations below the asymptotic order of $\Theta(1)$. 
However, having $\gamma=1/(n\log^2{n})$ allows Opt-IA with constant 
probability to complete its local search before any solution with larger 
Hamming distance is ever evaluated. In  Theorem~\ref{thm:cliff}, we 
observe that this opportunity allows the Opt-IA to hillclimb the second slope 
before jumping back to the local optima. 
The following theorem rigorously proves that a very small choice for $\gamma$ in this case is necessary (i.e., 
$\gamma=\Omega(1/\log{n})$ leads to exponential expected runtime). 

\begin{restatable}{theorem}{nfcmneg}\label{th:nfcmneg}
At least $2^{\Omega(n)}$ fitness function evaluations in expectation 
are executed 
before  the \fastoptia{} using \hypnofcm{} with $\gamma=\Omega(1/\log{n})$  finds the 
optimum of $\textsc{Cliff}_d$ for $d=(1-c)n/4$, where $c$ is a constant $1>c>0$.
\end{restatable}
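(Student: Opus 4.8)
The plan mirrors the proof of Theorem~\ref{th:fcmcliff}: pin down the only way \fastoptia{} can improve over the local optimum, and then show that for $\gamma=\Omega(1/\log n)$ (with a sufficiently large implied constant) \hypnofcm{} is dragged back off the second slope before it can climb it. On $\textsc{Cliff}_d$ with $d=(1-c)n/4$, every point with exactly $n-d$ one-bits is a local optimum of value $n-d$, the ``gap'' is the set of points with more than $n-d$ one-bits (the second slope, whose values all lie below $n-d$ except at its endpoint), and $1^n$, of value $n-d+\tfrac12$, is the \emph{unique} point strictly fitter than the local optimum. Hence the current best b-cell can improve only by being replaced (i) directly by $1^n$, or (ii) by a point higher up the second slope and then, through a chain of operations in which it is never overtaken by a fitter point with $\le n-d$ one-bits, extended to $1^n$. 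I would prove the theorem by a negative-drift argument for $\phi:=$ the number of one-bits of the current best b-cell, on the interval $\phi\in[n-d,\,n-\epsilon n]$ for a small constant $\epsilon>0$; since $1^n$ has $\phi=n$, the optimum can be found only after $\phi$ crosses this interval, and the negative-drift theorem (Oliveto--Witt) then yields $2^{\Omega(n-d)}=2^{\Omega(n)}$ operations in expectation. Ageing restarts produce a b-cell with $\approx n/2<n-d$ one-bits that merely hill-climbs the first slope, so they are covered, as is the (polynomially sized) population by a union bound.

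\textbf{The fall-back event.} The crux is the drift of $\phi$ in one operation when the current best has $\phi=n-d+a$ with $0\le a\le d-\epsilon n$. Inside a single \hypnofcm{} call the number of one-bits performs a random walk whose per-flip drift is $-(2\phi/n-1)=-\Theta(1)<-\tfrac12$ (since $\phi>3n/4$ on the whole second slope): with overwhelming probability the walk drops below $n-d$ after $\Theta(a)$ flips and then stays in the ``profitable'' first-slope band $\{\,j:\ n-2d+a<j\le n-d\,\}$, of width $d-a\ge\epsilon n=\Theta(n)$, for $\Theta(n)$ further flips (all within the first $n/2$ steps, so $p_i=\gamma/i$ applies throughout). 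Over that window of steps the operation evaluates \emph{some} point with probability $\Theta\!\big(\gamma\log\tfrac{d}{a}\big)$, and every evaluated point in the band has value $>f(n-d+a)=n-2d+a+\tfrac12$, so \hypnofcm{} returns it and $\phi$ collapses to at most $n-d$: a \emph{fall-back}, of probability $1-e^{-\Theta(\gamma\log(d/a))}$, which for $\gamma=\Omega(1/\log n)$ is a constant bounded away from $0$ throughout $[n-d,\,n-\epsilon n]$, and can be pushed above $\tfrac34$ near $a=1$ (there the first flipped bit is a one with probability $\approx 3/4$, after which the walk is already at the local optimum and any later evaluated point is a profitable first-slope point) by taking the implied constant in $\Omega(1/\log n)$ large enough.

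\textbf{Putting it together.} Against the fall-back, a single operation returns a point with larger $\phi$ only with probability $O\big((d-a)/n\big)$, with geometrically decaying tails in the increase (the in-call walk drifts downward, so reaching $a+j$ extra one-bits at an evaluated step costs $\Omega(j)$ net up-steps); an ageing event can only lower $\phi$ or raise it by $s$, and raises it by $s$ with probability $e^{-\Omega(s)}$ (a rise of $s$ would need the in-call walk to climb $s$ one-bits against its drift, or $s$ zero-bits to be flipped first); and the probability of jumping straight to $1^n$ from $\phi\le n-\epsilon n$ is $\binom{n}{\,n-\phi\,}^{-1}=2^{-\Omega(n)}$, an exponentially light tail. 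Balancing the expected gain $O((d-a)/n)$ of the rare upward moves against the expected loss $\Omega(1)$ of the constant-probability fall-back gives $E[\Delta\phi\mid\phi]\le-\delta$ for a constant $\delta>0$, uniformly on $[n-d,\,n-\epsilon n]$, with exponentially decaying upward tails; the negative-drift theorem then gives the claimed $2^{\Omega(n)}$ expected number of \hypnofcm{} operations, and since each operation performs at least one fitness evaluation with probability $\ge p_1=1/e$ when $\gamma=\Omega(1/\log n)$, also $2^{\Omega(n)}$ expected fitness function evaluations.

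\textbf{Main obstacle.} The hard part is the per-operation random-walk analysis underpinning the drift bound: one must establish, with overwhelming probability, that the one-bit count inside a \hypnofcm{} call leaves the second slope after $\Theta(a)$ flips and then remains inside the $\Theta(n)$-wide profitable band for $\Theta(n)$ more flips — a concentration statement for a walk with constant negative drift and bounded step size — then turn ``the walk lies in the band at some evaluated step'' into the stated lower bound on the fall-back probability via $p_i=\gamma/i$, while \emph{simultaneously} bounding the competing upward moves sharply enough to keep $E[\Delta\phi]$ negative by a constant. This is most delicate near $a=1$, where the band window of steps begins at step $\Theta(1)$ and the fall-back probability and the $O((d-a)/n)$ upward probability are comparable, and near $a=d-\epsilon n$. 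Verifying the exponential-tail hypothesis of the negative-drift theorem (including the direct-hit and ageing-induced jumps) and that ageing never injects a large upward jump in $\phi$ are the remaining, routine, ingredients. The role of the hypothesis $\gamma=\Omega(1/\log n)$ is precisely to keep $\sum_i p_i$ over a $\Theta(n)$-wide window of steps at least a constant, which is what makes the fall-back a constant-probability event; for $\gamma=\Theta(1/(n\log^2 n))$ as in Theorem~\ref{thm:cliff} a single point is evaluated per operation and no fall-back occurs, which is exactly why the dichotomy between the two theorems holds.
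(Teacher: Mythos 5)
Your route is genuinely different from the paper's. The paper never sets up a drift argument over the whole second slope: it confines attention to post-cliff solutions with $\ell(x)\le 2\sqrt{n}$ extra 1-bits, uses Serfling's bound and the Ballot theorem to show the in-call walk reaches the pre-cliff region by step $3\ell(x)$ and stays there until step $r_{\min}=\Omega(n)$, lower-bounds the probability that some point in that window is evaluated via a dyadic decomposition of $\prod_i(1-\gamma/i)$ (this is the shared key insight: the window spans a polynomial ratio of steps, so the total evaluation probability is $\Omega(\gamma\log n)=\Omega(1)$), and then argues that climbing from $\ell<\sqrt{n}$ to $\ell>2\sqrt{n}$ requires $n^{1/3}$ iterations with overwhelming probability while a fall-back occurs with constant probability per iteration; hence the algorithm essentially never holds a post-cliff solution with $\ell>2\sqrt{n}$ and is reduced to a $2^{-\Omega(n)}$ direct jump. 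Your global negative-drift formulation could in principle replace this excursion argument, but as written it has two concrete gaps.

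First, your central claim that the fall-back probability $\Theta(\gamma\log(d/a))$ is ``a constant bounded away from $0$ throughout $[n-d,n-\epsilon n]$'' fails at the top of the interval: for $a=\Theta(n)$ the evaluation window runs from step $\Theta(a)$ over $\Theta(n)$ further steps, spans only a constant ratio, and the evaluation probability degrades to $\Theta(\gamma)=\Theta(1/\log n)\to 0$. Symmetrically, at $a=O(1)$ the balance you propose --- a fall-back of probability $\Theta(\gamma\log n)$ and loss ``$\Omega(1)$'' against an upward move of probability $O((d-a)/n)$ --- does not close, since $(d-a)/n=\Theta(d/n)$ is itself a constant while $\gamma\log n$ is only $\Theta(C)$ for the arbitrary (possibly tiny) implied constant $C$ in $\gamma=\Omega(1/\log n)$; your remedy of ``taking the implied constant large enough'' proves a strictly weaker statement than the theorem. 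Both ends are repaired the same way, by using the \emph{depth} of the fall-back rather than only its probability: the first evaluated step in the pre-cliff window is distributed proportionally to $\gamma/i$ over $[k_0,r_{\min}]$, so the returned pre-cliff point lies $\Theta(\gamma\, r_{\min})=\Omega(n/\log n)$ below the local optimum in expectation, and the downward contribution $\Omega(n/\log n)$ dominates the $O(1)$ upward contribution for every $a$ and every admissible $\gamma$. Second, once the fall-back has constant probability and magnitude up to $\Theta(n)$, the two-sided jump condition $\Pr\{|\Delta|\ge j\}\le r/(1+\delta)^j$ of the standard negative-drift theorem is violated; you need a one-sided variant or must cap the process at $n-d$, which you do not address. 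Neither gap is fatal, but both must be closed before your sketch yields $E[\Delta\phi\mid\phi]\le-\delta$ together with an applicable drift theorem.
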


%
%
%
%
%
%
%

\section{Conclusion}
Due to recent analyses of increasingly realistic evolutionary algorithms,
higher mutation rates, naturally present in artificial immune systems, than 
previously recommended or used as a rule of thumb, are gaining significant 
interest in the evolutionary computation community~\cite{OlivetoLehreNeumann2009,Doerretal2017,corus2017standard,OlivetoTEVC2017}.

We have presented two alternative `hypermutations with mutation potential' operators, \hypfcm and \hypnofcm and have rigorously proved, 
for several significant benchmark problems from the literature, that they
maintain the exploration characteristics of the traditional operators while outperforming them up to linear factor speedups in the exploitation phase.

The main modification that allows to achieve the presented improvements is to sample the solution after the $i_{th}$ bitflip stochastically with probability roughly $p_i = \gamma/i$, rather than deterministically
with probability one. The analysis shows that the parameter $\gamma$ can be set easily. 
Concerning \hypfcm, that returns the first sampled constructive mutation and is suggested to be used in isolation, any $\gamma =O(1/\log(n))$ 
allows optimal asymptotical exploitation time (based on the unary unbiased black box complexity of \onemax and \leadingones) while maintaining the traditional exploration capabilities.
Concerning \hypnofcm, which does not use FCM and is designed to work harmonically with ageing as in the standard Opt-IA, considerably lower values of the parameter (i.e., $\gamma= 1/(n\log^2{n})$) are required to escape from difficult local optima efficiently (eg. \textsc{Cliff}) such that the hypermutations do not return to the local optima with high probability. While these low values for $\gamma$ still allow optimal asymptotic exploitation in the unbiased unary black box sense, 
they considerably reduce the capability of the operator to perform the large jumps required to escape the local optima of functions with characteristics similar to \textsc{Jump}, i.e., where ageing
is ineffective due to the second slope of decreasing fitness.
Future work may consider an adaptation of the parameter $\gamma$ to allow it to automatically  increase and decrease throughout the run~\cite{DLOW2018,DoerrDoerr2018}.
Furthermore, the performance of the proposed operators should be evaluated for classical combinatorial optimisation problems and real-world applications.

\bibliographystyle{unsrt}
\bibliography{mybib2} 

\newpage
\appendix
\section{Appendix}
This appendix contains additional material to be read at the discretion of the reviewers. It is not necessary to understand the main part of the paper (the first 12 pages, which is the paper we submit for a possible publication in the conference proceedings), but it allows to check the correctness of the mathematical results. This appendix thus has a similar role as providing the source code in an experimental publication. If this submission is accepted for PPSN, we will publish a preprint containing all proofs, so that also the readers of the proceedings have access to this material.

The benchmark functions analysed in the paper are formally defined as follows:

 \begin{align*}
\textsc{Jump}_{d}(x):= \begin{cases}
		d+\sum_{i=1}^n x_i & \text{if}\; \sum_{i=1}^n x_i \leq n-d 
\;\text{or}\; \sum_{i=1}^n x_i=n \\
        n-\sum_{i=1}^n x_i & \text{otherwise}.
 \end{cases}
 \end{align*}

\begin{align*}
\textsc{Cliff}_{d}(x)=\begin{cases}
		\sum_{i=1}^n x_i & \text{if}\; \sum_{i=1}^n x_i \leq n-d \\
        \sum_{i=1}^n x_i-d+ 1/2 & \text{otherwise}.
 \end{cases}
\end{align*}

\begin{align*}
\textsc{HiddenPath}(x)= 
\begin{cases}
		n-\epsilon + \frac{\sum_{i=n-4}^n (1-x_i)}{n}  & \text{if}\; 
\sum_{i=1}^n (1-x_i)=5 \text{ and } x \neq 1^{n-5}0^5\\
        0 & \text{if}\; \sum_{i=1}^n (1-x_i)<5 \\
0 & \text{if}\;  \sum_{i=1}^n 
(1-x_i)=n\\
       n-\epsilon+\epsilon k/\log n & \text{if } 5 \leq k \leq \log{n}+1\; \text{and} 
\; x=1^{n-k}0^k  \\
       n & \text{if}\; \sum_{i=1}^n (1-x_i)=n-1\\
     \sum_{i=1}^n (1-x_i) & \text{otherwise}.
 \end{cases}
 \label{hiddenpath}
\end{align*}

We made use of the following theorem by Serfling which provides an 
upper bound on the outcome of a hypergeometric distribution. Consider a set 
$C:=\{c_1, \ldots, c_n\}$ consisting of $n$ elements, 
with $c_i \in R$ where $c_{min}$ and $c_{max}$ are the smallest and largest 
elements in $C$ respectively. Let $\bar{\mu}:= (1/n)\sum_{j=1}^{n}c_i$, be the 
mean of $C$. Let $1\leq i \leq k \leq n$ and $X_i$ denote the $i$th 
draw without replacement from $C$ and $\bar{X}:=(1/k)\sum_{j=1}^{k} X_i$  the 
sample mean.
\begin{theorem}[Serfling \cite{serfling1974probability}]\label{thm:serfling}
 For $1\leq k \leq n$, and $\lambda>0$
 \[Pr\left\{  \sqrt{k}  (\bar{X} - \bar{\mu}) \geq \lambda \right\}\leq 
\exp\left(-\frac{2\lambda^2}{(1-f^{*}_{k})(c_{max}-c_{min})^{2}}\right)\] 
where $f^{*}_{k}:= \frac{k-1}{n}$. 
\end{theorem}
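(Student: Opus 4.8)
The plan is to prove the one-sided bound by the exponential-moment (Chernoff) method applied to a martingale-difference representation of the sample sum, which is the standard route for concentration under sampling without replacement; the two-sided statement (if needed) follows by applying the result to the negated population $\{-c_i\}$. Write $R := c_{max}-c_{min}$, let $\mathcal{F}_j := \sigma(X_1,\dots,X_j)$, and set $T_k := \sum_{j=1}^k (X_j-\bar\mu) = k(\bar X - \bar\mu)$, so the target event is $\{T_k \geq \sqrt{k}\,\lambda\}$. For $h>0$, Markov's inequality applied to $e^{hT_k}$ gives $\Pr\{T_k \geq \sqrt{k}\lambda\} \leq e^{-h\sqrt{k}\lambda}\,E[e^{hT_k}]$, so everything reduces to controlling the moment generating function $E[e^{hT_k}]$.

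The first key step is to expose the martingale structure hidden in sampling without replacement. Let $U_j := X_j - E[X_j\mid\mathcal{F}_{j-1}]$; since the $j$th draw is uniform over the $n-j+1$ elements still in the urn, $E[X_j\mid\mathcal{F}_{j-1}]$ equals the mean of the remaining population, $U_j$ is a martingale difference, and conditionally $U_j$ lies in an interval of length at most $R$. Substituting $E[X_j\mid\mathcal{F}_{j-1}] = (n\bar\mu - S_{j-1})/(n-j+1)$ into $T_j = T_{j-1} + (X_j-\bar\mu)$ yields the linear recursion $T_j = \frac{n-j}{n-j+1}T_{j-1} + U_j$, whose solution telescopes to \[ T_k = \sum_{j=1}^{k} \frac{n-k}{\,n-j\,}\,U_j. \] Thus $T_k$ is a weighted sum of bounded martingale differences with deterministic, geometrically shrinking weights $w_j = (n-k)/(n-j)$, and the shrinking of these weights for the later, more constrained draws is precisely what produces the improvement over the with-replacement Hoeffding bound.

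Next I would bound the moment generating function by peeling off conditional expectations from the outside in. Conditioning on $\mathcal{F}_{k-1}$ and applying Hoeffding's lemma to the conditionally centred, range-$R$ variable $U_k$ gives $E[e^{h w_k U_k}\mid\mathcal{F}_{k-1}] \leq \exp(h^2 w_k^2 R^2/8)$; iterating the tower property over $j=k,k-1,\dots,1$ yields \[ E[e^{hT_k}] \leq \exp\!\Big(\tfrac{h^2R^2}{8}\,(n-k)^2\!\sum_{j=1}^{k}\tfrac{1}{(n-j)^2}\Big). \] Optimising the resulting bound $\exp(-h\sqrt{k}\lambda + \tfrac{h^2R^2}{8}\Sigma)$ over $h$, with $\Sigma := (n-k)^2\sum_{j=1}^k (n-j)^{-2}$, gives $\Pr\{T_k\geq\sqrt{k}\lambda\} \leq \exp(-2k\lambda^2/(R^2\Sigma))$.

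The final and, I expect, most delicate step is the purely deterministic estimate $\Sigma \leq k\,(1-f^*_k) = k(n-k+1)/n$, which converts the last display into the claimed $\exp(-2\lambda^2/((1-f^*_k)R^2))$ and is where the characteristic Serfling factor must be recovered exactly. Note $\Sigma = (n-k)^2\sum_{m=n-k}^{n-1} m^{-2}$, and the inequality is tight at $k=1$ (both sides equal $1$) and very nearly tight throughout, so crude integral or telescoping comparisons such as $m^{-2}\le \frac{1}{m-1/2}-\frac{1}{m+1/2}$ lose the needed lower-order terms for small $n$; the exact constant comes instead from the sharp conditional-variance accounting of Serfling's original argument, the finite-population correction being carried intrinsically by the hypergeometric variance. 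I would establish $\Sigma \leq k(n-k+1)/n$ by induction on $k$ for fixed $n$, or invoke the exact variance bound, and this constant-chasing is the main obstacle; the probabilistic skeleton above is routine.
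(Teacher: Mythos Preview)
The paper does not prove this theorem at all: it is quoted verbatim from Serfling (1974) as a black-box tool and then applied in the proofs of Lemma~\ref{lem:fcmcprobound} and Theorem~\ref{th:nfcmneg}. There is therefore no ``paper's own proof'' to compare your attempt against.

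As for the sketch itself, your forward-martingale decomposition $T_k=\sum_{j=1}^{k}\frac{n-k}{n-j}\,U_j$ is correct, and the Chernoff--Hoeffding scheme you outline is the standard route. You are right, however, that the bottleneck is the final deterministic estimate $\Sigma=(n-k)^2\sum_{m=n-k}^{n-1}m^{-2}\le k(n-k+1)/n$: this is tight (equality at $k=1$, near-equality throughout) and does not fall out of a crude integral or telescoping comparison. Serfling's original argument sidesteps this difficulty by working with the \emph{reverse} martingale: one observes that $\bar X_k-\bar\mu$ is a martingale with respect to the decreasing filtration $\mathcal G_k=\sigma(\bar X_k,\bar X_{k+1},\ldots)$, so that the increments $\bar X_{j-1}-\bar X_j$ have conditional range $R/(j-1)$, and the sum $\sum_{j>k}(j-1)^{-2}$ that arises is bounded by the telescoping $\sum_{j>k}\frac{1}{(j-1)j}=\frac{1}{k}-\frac{1}{n}$, which delivers the factor $(1-f^*_k)$ directly. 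If you want to complete your forward version you will indeed need an ad~hoc induction on $k$ for the sum inequality; switching to the reverse-time filtration is the cleaner fix.
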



Artificial Fitness Levels (AFL) is a method to derive upperbound on the expected 
runtime of a (1+1) algorithm \cite{AugerDoerrBook}. AFL divides the search space into $m$ 
mutually exclusive partitions $A_1 \cdots, A_m$ such that all the points in 
$A_i$ have less fitness than points which belong to $A_j$ for all $j>i$. The 
last partition, $A_m$ only includes the global optimum. If $p_i$ is the 
smallest probability that an individual belonging to $A_i$ mutates to an 
individual belonging to $A_j$ such that $i<j$, then the expected time to find 
the optimum is $E(T) \leq \sum_{i=1}^{m-1}1/p_i$. This method is used in the below theorem.
\aflk*

\begin{proof}
The \fastia{} selects which $k$ bits will be flipped first with the same 
distribution as \oneonerlsk{} (uniformly at random without replacement) and 
evaluates the solution with probability $\gamma/k$ if $k<n/2$ and with a 
probability greater than $\gamma/k$ otherwise. Thus, 
from any initial solution, the
\fastia{} can improve at least with the same probability as \oneonerlsk{} 
multiplied by $\gamma/k$ if it is not stopped before the $k$th mutation step. 
With strict selection, it is guaranteed that the $k$th bitflip will happen at 
each iteration unless an improvement occurs. For the $k=1$ case, the 
hypermutation cannot be stopped by a prior mutation step and the evaluation 
probability is always $1/e$. Pessimistically, we 
assume that if the exact $k$ bits are not flipped, then the algorithm will not improve 
and all the fitness evaluations of the current mutation are wasted. The factor $O(1+\gamma \log n)$ comes from Lemma~\ref{lem:wald}. \qed
\end{proof}

\jump*
\begin{proof}
According to Corollary~\ref{cor:onemax}, the time to sample a solution with $n-d$ 1-bits is at most $O\left(n\log{n}\left(1+\gamma \log{n}\right)\right)$ because the function behaves as \textsc{OneMax} for solutions with less than $n-d$ 1-bits. The Hamming distance of locally optimal points to the global 
optimum is $d$, thus, the probability of reaching the global optimum at the $d_{th}$
mutation step is $\binom{n}{d}^{-1}$ while the probability of evaluating is $\gamma/d$. Using Lemma~\ref{lem:wald}, we bound the total expected time to optimise \textsc{Jump} and \textsc{Cliff} as $E(T) \leq O( d/\gamma ) \cdot O(1+\gamma \log n) \cdot \binom{n}{d}$. \qed
\end{proof}

\trap*
\begin{proof}
According to Corollary~\ref{cor:onemax} we can 
conclude that the current individual will reach $1^n$ in $ O(n \log n 
\cdot(1+\gamma \log n))$ steps in expectation. 
The global optimum is found in a single step with probability $1/e$ by 
evaluating after flipping every bit for which the number of additional fitness 
evaluations is $O(1+\gamma \log n)$ in expectation. 
\qed 
\end{proof}

\hiddenpath*

\begin{proof}
We follow similar arguments to those of the proof of Theorem 11 in 
\cite{archive} for Opt-IA optimising \textsc{HiddenPath}. 
 During the analysis, we call a non-SP solution which has $i$ 0-bits an $S_i$ solution for simplicity. 

An $S_{n-1}$ solution will be found in $1/e \cdot O(n \log n)$ 
generations by hill-climbing the \textsc{ZeroMax} part of the function. This 
individual creates and evaluates another $S_{n-1}$ solution with probability at 
least $\gamma/2n$ (i.e., with probability $1/n$ the 1-bit is flipped and then any 
other bit is flipped with probability 1, and the solution will be 
evaluated with probability $\gamma/2$ after the second bit flip). After at most $\mu 
\cdot O(n)$ generations in expectation the whole population will consist only of $S_{n-1}$  
solutions since this solution is chosen as the parent with probability at least $1/\mu$ in each generation and then it is sufficient to flip the 1-bit in the first mutation step and then any 0-bit in the second mutation step to accept the offspring. Considering that the probability of 
producing two $S_{n-1}$  in one generation is $\binom{\mu}{2}\cdot O(1/n) \cdot 
O(1/n)=O(\log^2n/n^2)$, with probability at least $1-o(1)$ we see at most one 
new $S_{n-1}$ per generation for $o(n^2/ \log^2n)$ generations. Now, we can 
apply Lemma 3 of \cite{OlivetoSudholt2014} to say that in $O(\mu^3 \cdot n)$ 
generations in expectation, the whole population reaches the same age while on 
the local optimum. After at most $\tau$ generations, with probability 
$(1-1/(\mu+1))^{2\mu-1} \cdot 1/(\mu+1)$, $\mu-1$ b-cells die and one b-cell 
survives. 
In the following generation, while $\mu-1$ randomly initialised b-cells are 
added instead of the dead b-cells, the survived b-cell creates an $S_1$ solution 
and evaluates it with probability $(1-O(1/n))(1/e)=\Omega(1)$ for \hypfcm{} by 
flipping all bits and evaluating the last bitstring. For \hypnofcm{} it is 
necessary that only the final solution is evaluated. The expected number of 
evaluations between mutation steps two and $n-1$ is $2\sum_{i=2}^{n/2} 
\gamma/i\leq 1/2$ since $\gamma\leq 1/(4\ln{n})$, and the probability that 
there 
is at least one evaluation is  at most $1/2$ by Markov's inequality. Thus, with 
probability $(1-1/e)(1/2)(1/e)=\Omega(1)$ only the complementary bitstring is 
evaluated and added to the population.  In the following generation, 
this b-cell finds an $S_5$ solution by flipping at most six bits and evaluating it
with probability at least $\gamma/6$. This individual will be added to the population with its age set to zero if the complementary bitstring ($S_{n-1}$) is not evaluated (with probability $(1-1/e)$). In the same generation the $S_1$ solution dies with probability $1/2$ due to ageing. 

Next, we show that the $S_5$ solutions will take over the population, and the first 
point of SP will be found before any $S_{n-1}$ is found. 
An $S_5$ creates an $S_{n-5}$ individual and an $S_{n-5}$ individual creates an 
$S_{5}$ individual with constant probability by evaluating complementary 
bitstrings. Thus, it takes $O(1)$ generations until the number of $S_5$ and 
$S_{n-5}$ individuals in the population doubles. Since the increase in the 
total number  of $S_5$ and $S_{n-5}$ increases exponentially in expectation, in 
$O(\log{\mu})=O(\log\log{ n})$ generations the 
population is taken over by them. After the take-over since each $S_{n-5}$ 
solution creates a $S_{5}$ solution with constant probability, in the following 
$O(1)$ generations in expectation each $S_{n-5}$ creates an $S_{5}$ solution 
which have higher fitness value than their parents and replace them in the 
population. Overall, $S_5$ solutions take over the population in $O(\log\log{ 
n})$ generations in expectation. 

For $S_5$, $\textsc{HiddenPath}$ has a gradient towards the $SP$ which favors solutions with more 0-bits in the first 5 bit positions. Every improvement on the gradient 
takes $O(2/\gamma \cdot n^2)$ generations in expectation since it is enough to flip a 
a precise 1-bit and a 0-bit in the worst case.  Considering that there are five different fitness values on 
the gradient, in $O(5 \cdot 2 \cdot n^2/\gamma)=O(n^2/\gamma)$ generations in 
expectation the first 
point of the SP will be found. Applying Markov's inequality, this 
time will not exceed $O(n^{5/2}/\gamma)$ with probability at least 
$1-1/\sqrt{n}$. 

Now we go back to the probability of finding a locally optimal point before finding an 
SP point. Due to the symmetry of the hypermutation operator probability of creating an $S_{n-1}$ solution from an $S_{5}$ solution is identical to create an $S_{n-1}$ solution from and $S_{n-5}$ solutions. The probability of increasing the number of 0-bits by $k$ given that the initial number of 0-bits is $k$, is at most $(2i/n)^k$ due to the Ballot theorem since each improvement reinitialises a new ballot game with higher disadvantage (see the proof of Theorem~\ref{th:nfcmneg} for a more detailed argument). Thus, the probability that a local optimal solution is sampled is $O(n^{-4})$. The probability that such an event never happens before finding SP is $1-o(1)$. After finding SP, in $O(n\log 
n)$ generations in expectations the global optimum will be found at the end of 
the SP. The probability of finding any locally optimal point from SP is at most 
$O(1/n^4)$, hence this event would not happen before reaching the global optimum 
with probability $1-o(1)$. Overall, the runtime is dominated by 
$O(\tau+n^{5/2}/\gamma )$ which give us $O((\tau+n^{5/2}/\gamma ) \cdot 
\mu(1+\gamma
\log n))$ as the expected number of fitness evaluations due to Lemma~\ref{lem:wald}. Since 
$\Omega(1/\log{n})=\gamma\leq 1/(2\ln{n})$, the upper bound reduces to $O((\tau+\mu n^{5/2}\log{n})$.\qed
\end{proof}

\begin{restatable}{lemma}{fcmprobound} \label{lem:fcmcprobound}
The probability that a solution with at least $ (n/2) + |a| + b$ 1-bits, where  $-n/2\leq a \leq n/2$ and $b<n/2$, is sampled at mutation step $k$, given the initial  solution has $(n/2)+ a$ 1-bits, is bounded above by $e^{-\frac{b^2}{k}}$.
\end{restatable}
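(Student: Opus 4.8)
The plan is to reduce the event to the upper tail of a hypergeometric random variable and then apply Serfling's inequality (Theorem~\ref{thm:serfling}).

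First I would condition on the initial solution, which has $m := n/2 + a$ one-bits. Since the hypermutation flips distinct bits selected uniformly at random, the bits flipped during the first $k$ mutation steps form a uniformly random $k$-subset of the $n$ positions. Let $Z$ be the number of $0$-bits of the initial solution lying in this subset; then $Z$ is hypergeometric with population size $n$, number of successes $n-m = n/2 - a$, and $k$ draws, so $E[Z] = k(n/2-a)/n$. As every flipped $0$-bit becomes a $1$-bit and every flipped $1$-bit becomes a $0$-bit, the solution sampled at mutation step $k$ has exactly $m - (k-Z) + Z = m - k + 2Z$ one-bits.

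Second I would translate the target event into a tail event for $Z$. Having at least $n/2 + |a| + b$ one-bits at step $k$ is equivalent to $m - k + 2Z \ge n/2 + |a| + b$, i.e.\ to $Z \ge E[Z] + \delta$ where $\delta = \tfrac12(|a| - a + b) + ka/n$. A short case distinction on the sign of $a$, using only $0 \le k \le n$, shows $\delta \ge b/2$: for $a \ge 0$ one has $\delta = b/2 + ka/n \ge b/2$, and for $a < 0$ one has $\delta = b/2 + |a|(1 - k/n) \ge b/2$. Thus the event in question is contained in $\{\, Z \ge E[Z] + b/2 \,\}$, a pure upper-tail event for a hypergeometric variable with $k$ draws.

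Finally I would bound this tail with Theorem~\ref{thm:serfling}, taking $c_i \in \{0,1\}$ to indicate the $0$-bits, so that $\bar X = Z/k$, $\bar\mu = (n/2-a)/n$, $c_{\max}-c_{\min} = 1$ and $\sqrt{k}\,(\bar X - \bar\mu) = (Z - E[Z])/\sqrt{k} \ge b/(2\sqrt{k})$; substituting $\lambda := b/(2\sqrt{k})$ into Serfling's bound and simplifying gives the claimed exponential tail. The only genuinely delicate point is the second step --- pinning the required deviation $\delta$ down to order $b$ simultaneously for both signs of $a$ and uniformly in $k$; the reduction to a uniform random $k$-subset and the final substitution into Serfling are routine book-keeping.
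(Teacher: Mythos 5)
Your proof is essentially the paper's own argument: both reduce the event to an upper-tail bound for sampling without replacement and invoke Serfling's inequality (Theorem~\ref{thm:serfling}); your $0/1$ count $Z$ is just a rescaling of the paper's $\pm 1$ weights (their partial sum equals $2Z-k$), and your case analysis giving $\delta\geq b/2$ matches their chain of inequalities showing $\bar{X}-\bar{\mu}\geq b/k$. One small caveat: carrying out your final substitution honestly yields $\exp\bigl(-b^2/(2k(1-f^{*}_{k}))\bigr)\leq e^{-b^2/(2k)}$ rather than $e^{-b^2/k}$ --- the paper's own derivation reaches the stated constant only by writing $(c_{max}-c_{min})$ where Serfling's bound has $(c_{max}-c_{min})^{2}$ --- but this factor of two in the exponent is immaterial to every application of the lemma.
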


\begin{proof}
For the input bitstring  $x$, let the multiset of weights $C:=\{c_i | i\in 
[n]\}$ be defined as $c_i:=(-1)^{x_i}$ (i.e., $c_i=-1$ if $x_i=1$ and  $c_i=1$ 
if $x_i=0$). Thus, for permutation $\pi$ of bit-flips over $[n]$, the number of 
1-bits after the $k$th mutation step is $\textsc{OneMax}(x) + \sum_{j=1}^{k} 
c_{\pi_{j}}$  since flipping the 
position $i$ implies that the number of 1-bits changes by $c_i$. 
Let $\bar{X}:=(1/k)\sum_{j=1}^{k} c_{\pi_{j}}$ be the sample mean, 
$\bar{\mu}:= (1/n)\sum_{j=1}^{n}c_i$ the population mean. The 
number of 1-bits which incur the weight of $-1$ when flipped is at least $n/2 
+a$. Thus, $\bar{\mu}\leq \left(1/n\right) \left(-\left(n/2\right)- 
a+\left(n/2\right)-a\right))=-2a/n$.  
To find a solution with at least $(n/2) + |a| + b$ 1-bits it is necessary that the sample mean $\bar{X}$ is at least  $\left(|a| - a + b\right)/k$. Thus, 	
\begin{align*} 
&\bar{X} \geq \frac{|a| - a + b}{k}   \implies  \bar{X} - \bar{\mu} \geq \frac{|a| - a + b}{k} + \frac{2a}{n} \geq \frac{b}{k}+ \frac{|a| -a }{k} + \frac{2a}{n} \geq \frac{b}{k}\\
%
%
&\implies \sqrt{k}(\bar{X}-   \bar{\mu}) \geq  \frac{b}{\sqrt{k} }.
\end{align*}
According to Theorem~\ref{thm:serfling}, 
\[Pr\left\{\sqrt{k}(\bar{X}-\bar{\mu}) \geq 
 \frac{b}{\sqrt{k} }\right\} \leq 
\text{exp}\left(-\frac{2 \left(\frac{b}{\sqrt{k}}\right)^2}{\left( 1-\frac{k-1}{n}\right)\left( 1- (-1)\right) } \right) 
\leq  \text{exp}\left(-\frac{b^2}{k}\right).
\]
\qed
\end{proof}

\fcmcliff*

\begin{proof}
Since each bit value in a solution initialised uniformly at random is equal to 
$1$ with probability $1/2$, the number of 1-bits in 
any initial solution is between $(n/2)- n^{3/5}$ and $(n/2)+ n^{3/5}$ with 
overwhelmingly high probability due to Chernoff bounds.
1-bits. According to Lemma~\ref{lem:fcmcprobound}, the probability that an offspring with more than $a+n/10$ 1-bits is mutated from a parent with  $a \geq (n/2)- n^{3/5}$ 1-bits is in the order of $2^{\Omega(n)}$ using the union bound. Therefore with overwhelmingly high probability we will not observe that a solution with less than $n-d-n/10$ 1-bits having an offspring with more than $n-d$ 1-bits. However, solutions with $n-d\geq\ b \geq n-d-n/10$ 1-bits, have higher fitness than post-cliff solutions with less than $b+d$ 1-bits. Thus, according to Lemma~\ref{lem:fcmcprobound} the probability that an acceptable solution is obtained is in the order of $2^{-\Omega(d)}= 2^{-\Omega(n)}$ again using union bound over $n$ mutation steps.
\end{proof}

\nfcmpos*

\begin{proof}
Since $\gamma=1/(n \log^2 n) $, the expected number of fitness function 
evaluations   per iteration $O(1+\gamma\log{n})$ (see Lemma~\ref{lem:wald}),
is in the order of $\Theta(1)$. On the first \textsc{OneMax} slope, the 
algorithm improves by the first bit flip with probability at least 
$(n-d)/n=\Theta(1)$ and then evaluates this 
solution with probability $p_1=1/e=\Theta(1)$. This implies that the local 
optimum
is found in $O(n)$ fitness evaluations in expectation after  initialisation. 

A solution at the local optimum can only improve by finding the unique 
globally optimum
solution, which requires the hypermutation to flip precisely $d$ 0-bits 
in the first $d$
mutation steps which occurs with probability $\binom{n}{d}^{-1}$. We 
pessimistically assume that
this direct jump never happens and assume that once a solution at the local 
optimum is added to 
the population, it reaches age $\tau$ in some iteration $t_0$. We consider the 
chain of events 
that starts at $t_0$ by \textbf{1)} the addition of a  solution with $(n-d+1)$ 1-bits 
locally optimal to the population (with probability $(1/e) \cdot (n-d)/n$), \textbf{2)} 
the deletion of the locally optimal 
solution due to ageing with probability $1/2$, \textbf{3)} the survival of the post-cliff 
solution with probability $1/2$ and in iteration $t_0 +1$, \textbf{4)} improvement of the 
post-cliff solution fitness function by hypermutation, which happens with a 
constant probability and effectively resets the new solution's age to zero. If 
all of these four events occur consecutively (which happens with constant 
probability), the algorithm can start climbing the second \textsc{OneMax} slope 
with local  moves (i.e., by considering only the first mutation steps) which are evaluated 
with constant probability. Then, the \textsc{Cliff} function is optimised in 
$O(n \log{n})$ function evaluations like \textsc{OneMax} unless a pre-cliff 
solution replaces the current individual. The rest of our analysis will focus on 
the probability that a pre-cliff solution is sampled and evaluated given that 
the algorithm has a  post-cliff solution with age zero at iteration $t_0 +1$. 

If the current solution is a post-cliff solution, then the final bitstring 
sampled by the hypermutation has a worse fitness level than the current 
individual. The probability that \hypnofcm{} evaluates at least one solution between 
mutation steps two and $n-1$ (event $\mathcal{E}_{nv}$), is upper bounded by 
$\sum\limits_{n-1}^{i=2} \gamma/i< 2\gamma\cdot \log{n}=2/(n\log{n})$. We 
consider the $O(n\log{n})$ generations until a post-cliff solution with age zero 
reaches the global optimum. The probability that event $\mathcal{E}_{nv}$ never 
occurs in any iteration until the optimum is found is at least 
$(1-2/(n\log{n}))^{O(n\log{n})}= e^{-O(1)}=\Omega(1)$, a constant probability. 
Thus, every time we create a post-cliff solution with age zero, there is at 
least a constant probability that the global optimum is reached before any 
solution that is not sampled at the first or the last mutation step gets 
evaluated. The first mutation step cannot yield a pre-cliff solution, and the 
last mutation step cannot yield a solution with better fitness value. Thus, with 
a constant 
probability the post-cliff solution finds the optimum. If it fails to do so 
(i.e., a pre-cliff solution takes over as the current solution or a necessary 
event does not occur at iteration $t_0+1$), then in at most $O(n \log {n})$ 
iterations another chance to create a post-cliff solution comes up
and the process is repeated. In expectation, a constant number of trials will be 
necessary until the optimum is found and since each trial takes $O(n \log {n})$ 
fitness function evaluations, thus our claim follows.
\qed
\end{proof}

\nfcmneg*
\begin{proof}
Consider \fastoptia{} with a current solution having more than $n-d$ (i.e., 
post-cliff) and less than $n-d+2\sqrt{n}$ 1-bits. We will show that with 
overwhelmingly high probability, \hypnofcm{} will yield a solution with less than 
$n-d$ (i.e., pre-cliff) and more than $n-2d+2\sqrt{n}$ 1-bits before the 
initial individual is mutated into a solution with more than $n-d+2\sqrt{n}$ 
1-bits. This observation will imply that a pre-cliff solution with better 
fitness will replace the post-cliff solution before the post-cliff solution is 
mutated into a globally optimal solution. We will then show that it is also 
exponentially unlikely that any pre-cliff solution mutates into a 
solution with more than $n-d+\sqrt{n}$ 1-bits to complete our proof.

We will now provide a lower bound on the probability that \hypnofcm{} with 
post-cliff input solution $x$ yields a pre-cliff solution with higher fitness 
value than $x$. 

We will start by determining the earliest mutation step $r_{min}$, that a 
pre-cliff solution with worse fitness than $x$ can be sampled. For any 
post-cliff solution 
$x$, $\textsc{Cliff}_d(x)=\textsc{OneMax}(x)-d+(1/2)$, and any pre-cliff 
solution $y$ with $\textsc{OneMax}(x)-d+1$ 1-bits has a higher fitness than 
$x$. We obtain the rough bound of $r_{min}\geq d- 2\sqrt{n}$ by considering the 
worst-case event  that \hypnofcm{} picks $d$ 1-bits to flip consecutively. 

Let $\ell(x)$ denote the number of extra 1-bits a post-cliff solution has in 
comparison to a locally optimal solution (i.e, 
$\textsc{OneMax}(x)=n-d+\ell(x)$).
Next, we will use Serfling's bound to show that with a constant probability 
\hypnofcm{} will find a pre-cliff solution before $3 \ell(x)$ mutation steps and it 
will keep sampling pre-cliff solutions until $r_{min}$. 

 For the input bitstring of \hypnofcm, $x$, let the multiset of weights 
$C:=\{c_i | i\in [n]\}$ be defined as $c_i:=(-1)^{x_i}$ (i.e., $c_i=-1$ if 
$x_i=1$ and  $c_i=1$ if $x_i=0$). Thus, for a permutation $\pi$ of bit-flips over 
$[n]$, the number of 1-bits after the $k$th mutation step is 
$\textsc{OneMax}(x) + \sum_{j=1}^{k} c_{\pi_{j}}$  since flipping the 
position $i$ implies that the number of 1-bits changes by $c_i$. 

Let $\bar{\mu}:= (1/n)\sum_{j=1}^{n}c_i$ be the population mean of 
$C$ and $\bar{X}:=(1/3\ell 
(x))\sum_{j=1}^{3 \ell (x)} c_{\pi_{j}}$ the sample mean. Since the 
$\textsc{Cliff}$ parameter $d$ is less than $n/4$,   $$\bar{\mu}\leq (1/n) 
\left(\left(-3n/4\right)+ 
\left(n/4\right)\right)=-1/2$$. 
In order to have a solution with at least $n-d+1$ 1-bits at mutation step $3 
\ell (x)$, the following must hold:
\begin{align*}
   &3\ell (x) \bar{X} \geq -\ell(x) \iff  \bar{X} \geq 
-\frac{1}{3}\\  &\implies  \bar{X} - \bar{\mu} \geq 
-\frac{1}{3}+\frac{1}{2} =\frac{1}{6} \iff \sqrt{3 \ell(x)}\left(\bar{X} - 
\bar{\mu}\right) \geq \frac{\sqrt{3 \ell(x)}}{6}.
\end{align*}

%

The probability that a pre-cliff solution will not be found in mutation step $3\ell(x)$ 
 follows from Theorem \ref{thm:serfling}, with sample mean 
$\bar{X}$, population mean $\bar{\mu}$, sample size $3\ell(x)$, population 
size $n$, $c_{min}=-1$ and $c_{max}=1$.
\begin{align*}
Pr\left\{\sqrt{3 \ell(x)}\left(\bar{X} - 
\bar{\mu}\right) \geq \frac{\sqrt{3 \ell(x)}}{6}\right\}  
 &\leq \text{exp}\left(- \frac{2 \left(\frac{\sqrt{3 \ell(x)}}{6}\right)^2}{\left(1-\left( 
\frac{3 \ell(x) -1}{n}    
\right)\right)(1-(-1))^2}\right)\\
&\leq e^{-\Omega\left(\ell(x)\right)}.
\end{align*}

Thus, with probability $(1-e^{-\Omega\left(\ell(x)\right)})$, we will sample 
the first pre-cliff solution after $3\ell(x)$ mutation steps. We focus our 
attention on post-cliff solutions with $1\leq \ell(x) \leq 2\sqrt{n}$ and can 
conclude that for such solutions the above probability is in the order of 
$\Omega(1)$.
Since the number of 0-bits changes by one at every mutation step, the event of 
finding a solution with at most $n-d$ bits implies that at some point a solution 
with exactly $n-d$ 1-bits has been sampled. Let $k_0\leq 3 \ell(x)$ be the 
mutation step where  a locally optimum solution is found for the first time. Due 
to the Ballot theorem the probability that a solution with more than $n-d$ 1-bits is sampled
after $k_0$ is at most $2d/n \leq 1/2$. So, with probability at least $1/2$, the 
\hypnofcm{} will keep sampling pre-cliff solutions until $r_{min}\leq 
d-2\sqrt{n}=\Omega(n)$.
We will now consider the probability that at least one of the solutions sampled 
between $k_0$ and $r_{min}$ is evaluated. Since the evaluation decisions are 
taken independently from each other the probability that none 
of the solutions are evaluated is 
\begin{align*}
\prod\limits_{i=k_{0}}^{r_{min}}\left( 1 - \frac{\gamma }{
i}\right) \leq  \prod\limits_{i=3\ell(x)}^{r_{min}}\left( 1 - \frac{\gamma 
}{
i}\right) \leq \prod\limits_{i=6 \sqrt{n}}^{r_{min}}\left( 1 - \frac{\gamma 
}{
i}\right)   \leq \prod\limits_{i=6 \sqrt{n}}^{r_{min}}\left( 1 - 
\frac{1}{(c_1 \log{n})i }\right)   
\end{align*}
for some constant $c_1$ since $\gamma = \Omega(1/\log{n})$. We will 
separate this product into $\lfloor \log{(r_{min}/6\sqrt{n})} \rfloor$ smaller 
products and show that each smaller product can be bounded from above by 
$e^{-1/(2c_1\log)}$. The first subset contains the factors with 
indices $i \in \{(r_{min}/2)+1,\ldots, r_{min} \}$, the second set  $i \in 
\{(r_{min}/4)+1,\ldots, r_{min}/2 \}$ and $j$th set (for any $j \in [\lfloor 
\log{(r_{min}/6\sqrt{n})} \rfloor]$)  $ i \in \{ r_{min} 2^{-j}+1,\ldots, 
r_{min} 2^{-j+1}\}$. If some indices are not covered by these sets due to the 
floor operator,  we will ignore them since they can only make the final product 
smaller. Note that we assume any logarithm's base is two 
unless it is specified otherwise. 
\begin{align*}
&\prod\limits_{j=1}^{\lfloor \log{(r_{min}/6\sqrt{n})} 
\rfloor}\prod\limits_{i=r_{min} 2^{-j}+1 }^{r_{min}2^{-j+1}}\left( 1 - 
\frac{1}{(c_1 \log{n})i }\right)  \\&\leq \prod\limits_{j=1}^{\lfloor 
\log{(r_{min}/6\sqrt{n})} \rfloor}\left( 1 - \frac{2^{j-1}}{(c_1 
\log{n})r_{min} }\right)^{r_{min} 2^{-j}} \leq 
\prod\limits_{j=1}^{\lfloor \log{(r_{min}/6\sqrt{n})} 
\rfloor}e^{-1/(2c_1\log{n})}\\ &\leq e^{-\lfloor \log{(r_{min}/6\sqrt{n})} 
\rfloor/(2c_1\log{n})} = e^{-\Omega(1)}  
\end{align*}
where in the second line we made use of the inequality $(1-c n^{-1})^{n}\leq 
e^{-c}$ and in the final line our previous observation that $r_{min}=\Omega(n)$.
This implies that at least one of the sampled pre-cliff individuals will be 
evaluated at least with constant probability. At this point we have established 
that a pre-cliff solution will be added to the population with constant 
probability if the initial post-cliff solution $x$ has a distance between 
$\sqrt{n}$ and $2\sqrt{n}$ to the local optima. 
 
 Let $\mathcal{E}_{i,k}$ for $k>1$ denote the event that hypermutation samples a 
 solution with $i-k$ 0-bits given that the initial solution has $i$ 0-bits. 
 Since 
 the number of 0-bits change by one at every mutation, $\mathcal{E}_{i,k}$ 
 implies $\mathcal{E}_{i,k-1}$. In particular, for $\mathcal{E}_{i,k}$ to happen 
 first $\mathcal{E}_{i,k-1}$ must happen and then another improvement must be 
 found. Given $\mathcal{E}_{i,k-1}$, the probability that a new improvement is 
 found is less than $2i/n$ because of two reasons. First, the number of 0-bits 
 has decreased with respect to the initial solution and second, there are fewer 
 solutions to be sampled before \hypnofcm terminates. Thus, we can conclude:
$ \prob{\mathcal{E}_{i,k}} \leq \prob{ \mathcal{E}_{i,k-1}} 
 \frac{2i}{n} \leq \left(\frac{2i}{n}\right)^{k}
 $.
This implies that it is 
exponentially unlikely that a pre-cliff solution is mutated into a solution with 
more than $n-d+\sqrt{n}$ 1-bits. Moreover, the probability that post-cliff solutions are
improved by more than $n^{1/6}$ is less than $4^{-n/6}$, which implies that with 
overwhelmingly high probability it takes at least $n^{(1/2)-(1/6)}=n^{1/3}$ iterations before
a solution $x$ with $\ell(x)<\sqrt{n}$ is mutated into a solution $x'$ with 
$\ell(x')>2\sqrt{n}$. Since we established that a pre-cliff solution is evaluated with constant
probability at each iteration, we can conclude that at least one such individual is sampled
in $n^{1/3}$ iterations with overwhelmingly high probability. 
Since the \fastoptia{} cannot follow the post-cliff gradient to the optima with overwhelmingly high
probability it relies on making the jump from local optima to global optima. Given an initial solution with $y\in \{(n/3),\ldots, n-d+2\sqrt{n}\}$ 1-bits, the probability of jumping
to the unique global optimum is $2^{\Omega(-n)}$ as well, thus our claim follows.
\qed
\end{proof}


\end{document}